\numberwithin{figure}{section}
\numberwithin{table}{section}
\numberwithin{equation}{section}%
\newcommand{\RNum}[1]{\uppercase\expandafter{\romannumeral #1\relax}}
\theoremstyle{plain}%
\newtheorem{theorem}{Theorem}[section]
\newtheorem{lemma}[theorem]{Lemma}
 \newtheorem{fact}[theorem]{Fact}
\theoremstyle{plain}%
\newtheorem*{remark:unnumbered}[theorem]{Remark}%
\newtheorem{definition}[theorem]{Definition}
\theoremstyle{nonumberplain}%
\newtheorem{proof}{Proof:}%
\newcommand{\myqedsymbol}{$\square$}
\newtheorem{proofof}{Proof of\!}%
\def\compactify{\itemsep=0pt \topsep=0pt \partopsep=0pt \parsep=0pt}
\let\latexusecounter=\usecounter
\newcommand{\eps}{\varepsilon}%
\def\script#1{\mathcal{#1}}
\def\E{\mathbf{E}}
\def\path{\mathrm{path}}
\title{Learning-Based Low-Rank Approximations}
\newcommand{\A}{A}
\newcommand{\MS}{S}
\newcommand{\D}{\script{D}}
\newcommand{\Sp}{\mathbf{S}^{d-1}}
\newcommand{\hSp}{\mathbf{\hat B}^d}
\newcommand{\one}{\mathbf{1}}
\newcommand{\R}{\mathbb{R}}
\newcommand{\trS}{\mathsf{Tr}}
\newcommand{\te}{\mathsf{Te}}
\newcommand{\LL}{\mathsf{L}}
\newcommand{\trob}{($\rho',\delta$)-robust\xspace}
\newcommand{\M}{\mathbf{M}_{\D,\rho,\delta}}
\newcommand{\hM}{\mathbf{\hat M}_{\D,\rho,\delta}}
\newcommand{\evt}{\zeta_{\A,\delta,s}}
\newcommand{\sr}{r'}
\newcommand{\colsp}{\mathrm{colsp}}
\newcommand{\rowsp}{\mathrm{rowsp}}
\newcommand{\rank}{\mathrm{rank}}
\newcommand{\newchange}[1]{#1}
\author{%
	Piotr Indyk\thanks{CSAIL, MIT; \href{indyk@mit.edu}{indyk@mit.edu}.}
\and
  	Ali Vakilian\thanks{Dept.\ of Computer Science, University of Wisconsin - Madison; \href{vakilian@wisc.edu}{vakilian@wisc.edu}.} 
\and
	Yang Yuan\thanks{IIIS, Tsinghua University; \href{yuanyang@tsinghua.edu.cn}{yuanyang@tsinghua.edu.cn}. This work was mostly done when the second and third authors were at MIT.}
}
\begin{document}
\date{}
\maketitle
\begin{abstract}
We introduce a ``learning-based'' algorithm for the {\em low-rank decomposition} problem: given an $n \times d$ matrix $A$, and a parameter $k$,  compute a rank-$k$ matrix $A'$ that  minimizes the approximation loss $\|A-A'\|_F$. The algorithm uses a training set of input matrices in order to optimize its performance.  Specifically, some of the most efficient approximate algorithms for computing low-rank approximations proceed by computing a projection $SA$, where  $S$ is a sparse random $m \times n$ ``sketching matrix'', and then performing the singular value decomposition of $SA$. We  show how to replace the random matrix $S$ with a ``learned'' matrix of the same sparsity to reduce the error.  

Our experiments show that,  for multiple types of data sets, a learned sketch matrix can substantially reduce the approximation loss compared to a random matrix $S$, sometimes by one order of magnitude. We also study mixed matrices where only some of the rows are trained and the remaining ones are random, and show that matrices still offer improved performance while retaining worst-case guarantees. 

Finally, to understand the theoretical aspects of our approach, we study the special case of $m=1$. In particular, we give an approximation algorithm for minimizing the empirical loss, with approximation factor depending on the stable rank of matrices in the training set. We also show generalization bounds for the sketch matrix learning problem.
\end{abstract}

\section{Introduction}
The success  of modern machine learning made it applicable to problems that lie outside of the scope of ``classic AI''. In particular,  there has been a growing interest in using machine learning to  improve the performance of ``standard'' algorithms, by fine-tuning their behavior to  adapt to the properties of the input distribution, see e.g.,~\citep{wang2016learning, khalil2017learning, kraska2017case, balcan2018learning, lykouris2018competitive, purohit2018improving, gollapudi2019online, mitz2018model, mousavi2015deep, baldassarre2016learning, bora2017compressed, metzler2017learned, hand2017global, khani19, hsu2018learning}. This ``learning-based'' approach to algorithm design has attracted a considerable attention over the last few years, due to its potential to significantly improve the efficiency of some of the most widely used algorithmic tasks.  Many applications involve processing streams of data (video, data logs, customer activity etc)
by executing the same algorithm  on an hourly, daily or weekly basis. These data sets are typically not ``random'' or ``worst-case'';  instead,  they come from some distribution which does not change rapidly from execution to execution. This makes it possible to design better algorithms  tailored to the specific data distribution, trained on past instances of the problem.

The method has been particularly successful in the context of {\em compressed sensing}. In the latter framework, the goal is to recover an approximation to an $n$-dimensional vector $x$, given its ``linear measurement'' of the form $Sx$, where $S$  is an $m\times n$ matrix. Theoretical results~\citep{donoho2006compressed,candes2006robust} show that, if the matrix $S$ is selected {\em at random}, it is possible to recover the $k$ largest coefficients of $x$ with high probability using a matrix $S$ with $m=O(k \log n)$ rows. This guarantee is general and applies to  arbitrary vectors $x$. However, if vectors $x$ are selected from some natural distribution (e.g., they represent images), recent works~\citep{mousavi2015deep,baldassarre2016learning,metzler2017learned} show that one can use samples from that distribution to compute matrices $S$ that improve over a completely random matrix in terms of the recovery error. 

Compressed sensing is an example of a broader class of problems which can be solved using random projections. Another well-studied problem of this type is {\em low-rank decomposition}: given an $n \times d$ matrix $A$, and a parameter $k$,  compute a rank-$k$ matrix 
\begin{align*} 
[A]_k = \mbox{argmin}_{A': \mbox{~rank}(A') \le k} \|A-A'\|_F.
\end{align*}

Low-rank approximation is one of the most widely used tools in massive data analysis, machine learning and statistics, and has been a  subject of many algorithmic studies. In particular, multiple algorithms developed over the last decade use the ``sketching'' approach, see e.g.,~\citep{sarlos2006improved, woolfe2008fast, halko2011finding, clarkson2009numerical, clarksonwoodruff, nelson2013osnap, meng2013low, boutsidis2013improved,cohen2015dimensionality}. Its idea is to use efficiently computable random projections (a.k.a., ``sketches'') to reduce the problem size before performing low-rank decomposition, which makes the computation more space and time efficient. For example,~\citep{sarlos2006improved,clarkson2009numerical} show that if $S$ is a random matrix  of size $m \times n$ chosen from an appropriate distribution\footnote{Initial algorithms used matrices with independent sub-gaussian entries or randomized Fourier/Hadamard matrices~\citep{sarlos2006improved, woolfe2008fast, halko2011finding}. Starting from the seminal work of~\citep{clarksonwoodruff}, researchers began to explore sparse binary matrices, see e.g.,~\citep{nelson2013osnap,meng2013low}. In  this paper we mostly focus on the latter distribution.}, for $m$ depending on $\eps$, then one can recover a rank-$k$ matrix $A'$ such that
\begin{align*} 
\|A-A'\|_F \le (1+ \epsilon) \|A-[A]_k\|_F 
\end{align*}
by performing an SVD on $SA \in \R^{m\times d}$ followed by some post-processing.  
Typically the sketch length $m$  is small, so the matrix $SA$ can be stored using little space (in the context of streaming algorithms) or efficiently communicated (in the context of distributed algorithms). Furthermore, the SVD of $SA$ can be computed efficiently, especially after another round of sketching, reducing the overall computation time. See the survey~\citep{woodruff2014sketching} for an overview of these developments. 

In light of the aforementioned work on learning-based compressive sensing, it is natural to ask whether similar improvements in performance could be obtained for other sketch-based  algorithms, notably for low-rank decompositions.  In particular, reducing the sketch length $m$ while preserving its accuracy would make sketch-based algorithms more efficient.  Alternatively, one could make sketches more accurate for the same values of $m$.  This  is the problem we address in this paper.

\paragraph{Our Results.} Our main finding is that learned sketch matrices can indeed yield (much) more accurate low-rank decompositions than purely random matrices.  
We focus our study on a streaming algorithm for low-rank decomposition due to \citep{sarlos2006improved,clarkson2009numerical}, described in more detail in Section~\ref{sec:calculation}.
Specifically, suppose we have a training set of matrices $\trS=\{A_1, \ldots, A_N\}$ sampled from some distribution $\D$. Based on this training set, we compute a matrix $S^*$ that (locally) minimizes the empirical loss
\begin{equation}
\label{e:loss}
\sum_i \|A_i-\mathrm{SCW}(S^*,A_i)\|_F 
\end{equation}
where $\mathrm{SCW}(S^*,A_i)$ denotes the output of the aforementioned Sarlos-Clarkson-Woodruff streaming low-rank decomposition algorithm on matrix $A_i$ using the sketch matrix $S^*$. 
Once the sketch matrix $S^*$ is  computed, it can be used instead of a random sketch matrix in all future executions of the SCW algorithm.

We demonstrate empirically that, for multiple types of data sets, an optimized sketch matrix $S^*$ can substantially reduce the approximation loss compared to a random matrix $S$, sometimes by one order of magnitude  (see Figure~\ref{fig:bar} or~\ref{fig:vary_m}).
Equivalently, the optimized sketch matrix can achieve the same approximation loss for lower values of $m$ which results in sketching matrices with lower space usage.
Note that since we augment a streaming algorithm, our main focus is on improving its {\it space} usage (which in the distributed setting translates into the amount of communication). 
The latter is $O(md)$, the size of $SA$.

A possible disadvantage of learned sketch matrices is that an algorithm that uses them no longer offers {\em worst-case} guarantees. As a result, if such an algorithm is applied to an input matrix that does not conform to the training distribution, the results might be worse than if random matrices were used. To alleviate this issue, we also study {\em mixed} sketch matrices, where (say) half of the rows are trained and the other half are random. We observe that if such matrices are used in conjunction with the SCW  algorithm, its results are no worse than  if only the random part of the matrix was used (Theorem~\ref{thm:worst-case-bound} in Section~\ref{sec:worst_case_bound})\footnote{We note that this property is non-trivial, in the sense that it does not automatically hold for {\em all} sketching algorithms.  See Section~\ref{sec:worst_case_bound} for further discussion.}.  Thus, the resulting algorithm inherits the worst-case  performance guarantees of the random part of the sketching matrix. At the same time, we show that mixed matrices still substantially reduce the approximation loss compared to random ones, in some cases nearly matching the performance of ``pure'' learned matrices with the same number of rows. Thus, mixed random matrices offer ``the best of both worlds'': improved performance for matrices from the training distribution, and worst-case guarantees otherwise. 

Finally, in order to understand the theoretical aspects of our approach further, we study the special case of $m=1$. This corresponds to the case where the sketch matrix $S$ is just a single vector.  Our results are two-fold:
\begin{itemize}
\item We give an approximation algorithm for minimizing the empirical loss as in Equation~\ref{e:loss}, with an approximation factor depending on the stable rank of matrices in the training set. See Appendix~\ref{sec:optimization}.
\item Under certain assumptions about the robustness of the loss minimizer, we show generalization bounds for the solution computed over the training set. See Appendix~\ref{sec:generalization}.
\end{itemize}

The theoretical results on the case of $m=1$ are deferred to the full version of this paper.

\subsection{Related work}
As outlined in the introduction, over the last few years there has been multiple papers exploring the use of machine learning methods to improve the performance of ``standard'' algorithms. Among those,  the closest to the topic of our paper are the works on learning-based compressive sensing, such as \citep{mousavi2015deep,baldassarre2016learning,bora2017compressed,metzler2017learned}, and on learning-based streaming algorithms~\citep{hsu2018learning}. Since neither of these two lines of research addresses computing matrix spectra, the technical development therein was quite different from ours. 

In this paper we focus on learning-based optimization of low-rank approximation algorithms that use {\em linear sketches}, i.e., map the input matrix $A$ into $SA$ and perform computation on the latter. There are other sketching algorithms for low-rank approximation that involve {\em non-linear} sketches \citep{liberty2013simple,ghashami2014relative,ghashami2016frequent}. The benefit of linear sketches is that they are easy to update under linear changes to the matrix $A$, and (in the context of our work) that they are easy to differentiate, making it possible to compute the gradient of the loss function as in Equation~\ref{e:loss}. We do not know whether it is possible to use our learning-based approach for non-linear sketches, but we believe this is an interesting direction for future research.

\section{Preliminaries}
\label{sec:calculation}
\paragraph{Notation.} 
Consider a distribution $\D$ on matrices $\A\in \R^{n\times d}$. 
We define the training set as $\{\A_1, \cdots, \A_N \}$ sampled from $\D$. For matrix $\A$, its {\em singular value decomposition (SVD)} can be written as $\A=U \Sigma  V^\top$ 
such that both $U \in \mathbb{R}^{n\times n}$ and $V\in \mathbb{R}^{d\times n}$ have {\em orthonormal columns} and $\Sigma=\mathrm{diag}\{\lambda_1, \cdots, \lambda_d\}$ is a diagonal matrix with nonnegative entries.
Moreover, if $\rank(A) =r$, then the first $r$ columns of $U$ are an orthonormal basis for the {\em column space} of $A$ (we denote it as $\colsp(A)$), the first $r$ columns of $V$ are an orthonormal basis for the {\em row space} of $A$ (we denote it as $\rowsp(A)$)\footnote{The remaining columns of $U$ and $V$ respectively are orthonormal bases for the nullspace of $A$ and $A^\top$.} and $\lambda_i = 0$ for $i>r$. 
In many applications it is quicker and more economical to compute the {\em compact SVD} which only contains the rows and columns corresponding to the non-zero singular values of $\Sigma$: $\A=U^{c}\Sigma^{c} (V^c)^\top$ where $U^c \in \mathbb{R}^{n\times r}, \Sigma^c \in \mathbb{R}^{r\times r}$ and $V^c\in \mathbb{R}^{d\times r}$.

\paragraph{How sketching works.} We start by describing the SCW algorithm for low-rank matrix approximation, see Algorithm~\ref{alg:sketch_works}.  
The algorithm computes the singular value decomposition of $SA = U \Sigma V^\top$, and compute the best rank-$k$ approximation of $AV$. Finally it outputs $[AV]_kV^\top$ as a rank-$k$ approximation of $A$. 
\newchange{We emphasize that Sarlos and Clarkson-Woodruff proposed Algorithm~\ref{alg:sketch_works} with random sketching matrices $S$. In this paper, we follow the same framework but use learned (or partially learned) matrices.}

\begin{algorithm}[h]
	\begin{algorithmic}[1]
		\STATE {\bfseries Input:} $\A\in \R^{n \times d}, \MS\in \R^{m\times n}$ 
		\STATE $U, \Sigma, V^\top \leftarrow \text{\sc CompactSVD}(SA)$
		 \label{lst:line:compact_svd}
		  \quad$\rhd\quad$ \COMMENT{$r = \rank(SA), U\in \mathbb{R}^{m\times r}, V\in \mathbb{R}^{d\times r}$} 
		\STATE {\bfseries Return:} $[AV]_k V^\top$
	\end{algorithmic}
	\caption{Rank-$k$ approximation of a matrix $A$ using a sketch matrix $S$ (refer to Section~4.1.1 of \citep{clarkson2009numerical})}
	\label{alg:sketch_works}
\end{algorithm}

Note that if $m$ is much smaller than $d$ and $n$, the space bound of this algorithm is significantly better than when computing a rank-$k$ approximation for $A$ in the na\"{i}ve way. Thus, minimizing $m$ automatically reduces the space usage of  the algorithm. 

\paragraph{Sketching matrix.}
We use matrix $S$ that is sparse\footnote{The original papers \citep{sarlos2006improved,clarkson2009numerical} used dense matrices, but the work of \citep{clarksonwoodruff} showed that sparse matrices work as well. We use sparse matrices since they are more efficient to train and to operate on.}  Specifically, each column of $S$ has exactly one non-zero entry, which is either $+1$ or $-1$. This means that the fraction of non-zero entries in $S$ is $1/m$. 
Therefore, one can use a vector to represent $S$, which is very memory efficient.   
It is worth noting, however, after multiplying the sketching matrix $S$ with other matrices, the resulting matrix (e.g., $SA$) is in general not sparse. 

\section{Training Algorithm}\label{sec:main-alg}
In this section, we describe our learning-based algorithm for computing a data dependent sketch $S$. The main idea is to use backpropagation algorithm to compute the stochastic gradient of $S$ with respect to the rank-$k$ approximation loss in Equation~\ref{e:loss}, where
the initial value of $S$ is the same random sparse matrix used in SCW. 
Once we have the stochastic gradient, we can run stochastic gradient descent (SGD) algorithm to optimize $S$, in order to improve the loss. 
Our algorithm maintains the sparse structure of $S$, and only optimizes the values of the $n$ non-zero entries (initially $+1$ or $-1$).

\begin{minipage}{.52\textwidth}
\begin{algorithm}[H]
	\begin{algorithmic}[1]
		\STATE {\bfseries Input:} $\A_1\in \R^{m \times d}$ where $m<d$
		\STATE $U, \Sigma, V\leftarrow\{\}, \{\}, \{\}$
		\FOR {$i\leftarrow1\dots m$}
		\STATE $v_1\leftarrow$ random initialization in $\R^d$
		\FOR{$t\leftarrow1\dots T$} 
		\STATE $v_{t+1}\leftarrow \frac{A_i^\top A_i v_t}{\|A_i^\top A_i v_t\|_2} \quad \rhd\quad$ \COMMENT{power method}  
		\ENDFOR
		\STATE $V[i]\leftarrow v_{T+1}$
		\STATE $\Sigma[i]\leftarrow \|A_i V[i]\|_2$
		\STATE$U[i]\leftarrow \frac{A_i V[i]}{\Sigma[i]} $
		\STATE $A_{i+1}\leftarrow A_i- \Sigma[i] U[i] V[i]^\top$
		\ENDFOR
		\STATE {\bfseries Return:} $U, \Sigma, V$
	\end{algorithmic}
	\caption{Differentiable SVD implementation}
	\label{alg:optimize_S}
\end{algorithm}
\end{minipage}%
\hspace{0.1cm}
\begin{minipage}[t]{.45\textwidth}
	\centering
\usetikzlibrary{arrows.meta, automata,quotes}
\begin{tikzpicture}[overlay, yshift=1.5cm, xshift=-2.7cm]

\node at (2.4,1.6) {\LARGE $v_1$};
\draw[->,thick] (2.4,1.4)--+(0, -0.7);
\node at (2.4,0)  {\LARGE $v_{t+1} \leftarrow \frac{A_i^\top A_i v_t}{\|A_i^\top A_i v_t\|_2}$};
\node at (3.7,1.25) { $\times T$ times};

\draw[->,thick] (2.4,-0.7)--++(0,-0.9);
\draw[->,thick] (2.4,-1) -- ++ (2.25, 0) --++(0,2)--++(-2.25,0);
\filldraw (2.4,1) circle (1.5pt);
\filldraw (2.4,-1) circle (1.5pt);

\foreach \y in {-3}
\foreach \x in {-3.3}
{
\node at (7+\x,1+\y) {\LARGE $U$};
\node at (7+\x,0.3+\y) {\LARGE $\Sigma$};
\node at (7+\x,-0.4+\y) {\LARGE $V$};
\draw[thick] (4.6+\x, 0.7+\y) rectangle ++(2,0.6);
\draw[thick] (4.6+\x, 0+\y) rectangle ++(2,0.6);
\draw[thick] (4.6+\x, -0.7+\y) rectangle ++(2,0.6);
\draw[densely dotted] (5.1+\x, 0.7+\y) --+(0, 0.6);
\draw[densely dotted] (5.7+\x, 0.7+\y) --+(0, 0.6);
\draw[densely dotted] (5.1+\x, 0+\y) --+(0, 0.6);
\draw[densely dotted] (5.7+\x, 0+\y) --+(0, 0.6);
\draw[densely dotted] (5.1+\x, -0.7+\y) --+(0, 0.6);
\draw[densely dotted] (5.7+\x, -0.7+\y) --+(0, 0.6);
\node at (5.4+\x,1+\y) {\small $U[i]$};
\node at (5.4+\x,0.3+\y) {\small$\Sigma[i]$};
\node at (5.4+\x,-0.4+\y) {\small$V[i]$};
}

\end{tikzpicture}
\vspace{1.1in}

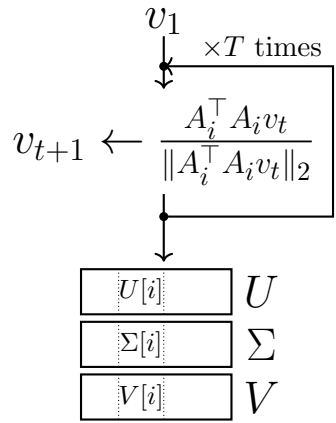
\captionof{figure}{$i$-th iteration of power method}
\vspace{.2in}
\label{fig:svd_illustration}
\end{minipage}

However, the standard SVD implementation (step 2 in Algorithm \ref{alg:sketch_works} ) is not differentiable, which means we cannot get the gradient in the straightforward way. 
To make SVD implementation differentiable, we use 
the fact that the SVD procedure can be represented as $m$ individual top singular value decompositions (see e.g.~\citep{lazySVD}), and that every top singular value decomposition can be computed  using the power method.
See Figure~\ref{fig:svd_illustration} and Algorithm~\ref{alg:optimize_S}. We store the results of the $i$-th iteration into the $i$-th entry of the list $U,\Sigma, V$, and finally concatenate all entries together to get the matrix (or matrix diagonal) format of $U, \Sigma, V$. This allows gradients to flow easily.

Due to the extremely long computational chain, it is infeasible to write down the explicit form of loss function or the gradients. However, just like how modern deep neural networks compute their gradients, 
we used the autograd feature in PyTorch to \emph{numerically} compute the gradient with respect to the sketching matrix $S$.

We emphasize again that our method is only optimizing $S$ for the training phase. After $S$ is fully trained, 
we still call Algorithm \ref{alg:sketch_works} for low rank approximation, which has exactly the same running time as the SCW algorithm, but with better performance \newchange{(i.e., the quality of the returned rank-$k$ matrix). We remark that the {\it time} complexity of SCW algorithm is $O(nmd)$ assuming $k\leq m\leq \min(n,d)$.}

\section{Worst Case Bound}
\label{sec:worst_case_bound}

In this section, we show that concatenating two sketching matrices $S_1$ and $S_2$ (of size respectively $m_1 \times n$ and $m_2 \times n$) into a single matrix $S_*$ (of size $(m_1+m_2)\times n$) will not increase the approximation loss of the final rank-$k$ solution computed by Algorithm~\ref{alg:sketch_works} compared to the case in which only one of $S_1$ or $S_2$ are used as the sketching matrix. In the rest of this section, the sketching matrix $S_*$ denotes the concatenation of $S_1$ and $S_2$ as follows: 
\begin{align*}
{S_*}_{((m_1+m_2) \times n)} &= \left[\begin{array}{c}
      {S_1}_{(m_1 \times n)}   \\
      \rule{40pt}{3pt} \\
      {S_2}_{(m_2 \times n)}	\\
    \end{array}\right] \\
\end{align*}
Formally, we prove the following theorem on the worst case performance of {\em mixed matrices}.
\begin{theorem}\label{thm:worst-case-bound}
Let $U_* \Sigma_* V_*^\top$ and $U_1 \Sigma_1 V_1^\top$ respectively denote the SVD of $S_*A$ and $S_1 A$.
Then,
\[
||[AV_*]_k V_*^\top - A||_F \leq ||[AV_1]_k V_1^\top - A||_F. 
\] 
\end{theorem}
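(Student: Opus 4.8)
The plan is to show that for \emph{any} matrix $V$ with orthonormal columns, the quantity $\|[AV]_kV^\top - A\|_F$ equals the smallest Frobenius error achievable by a rank-$k$ matrix whose row space is contained in $\colsp(V)$. The theorem then follows because $\colsp(V_1)\subseteq\colsp(V_*)$, so $V_*$ optimizes over a superset of the feasible set available to $V_1$.

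First I would record the containment of column spaces. Since $S_1A$ consists of a subset of the rows of $S_*A$, we have $\rowsp(S_1A)\subseteq\rowsp(S_*A)$. Because $V_1$ (resp.\ $V_*$) is obtained from the compact SVD of $S_1A$ (resp.\ $S_*A$), its columns form an orthonormal basis of $\rowsp(S_1A)$ (resp.\ $\rowsp(S_*A)$); hence $\colsp(V_1)\subseteq\colsp(V_*)$.

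Next comes the key lemma: for $V\in\R^{d\times r}$ with $V^\top V = I_r$,
\[
\|A-[AV]_kV^\top\|_F \;=\; \min\bigl\{\|A-X\|_F \;:\; \rank(X)\le k,\ \rowsp(X)\subseteq\colsp(V)\bigr\}.
\]
To prove this I would parametrize any feasible $X$ as $X = CV^\top$ for some $C\in\R^{n\times r}$ (possible since each row of $X$ lies in $\colsp(V)$), observe that $\rank(X)=\rank(C)$ because $V$ has full column rank, and then use the Pythagorean identity
\[
\|A-CV^\top\|_F^2 \;=\; \|A-AVV^\top\|_F^2 \;+\; \|AV-C\|_F^2,
\]
which holds because $(A-AVV^\top)V = 0$ (killing the cross term) and $\|MV^\top\|_F=\|M\|_F$ for every $M\in\R^{n\times r}$. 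The right-hand side is minimized over $\rank(C)\le k$ exactly by $C=[AV]_k$ (Eckart--Young), which yields the claim; the degenerate case $r<k$ is immediate, since then the rank constraint is vacuous and $[AV]_k = AV$, so $[AV]_kV^\top=AVV^\top$ is the orthogonal projection of $A$ onto $\colsp(V)$.

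Finally I would combine the two observations: applying the lemma once with $V=V_*$ and once with $V=V_1$, the two minimization problems share the objective $X\mapsto\|A-X\|_F$, and the constraint set for $V_1$ (requiring $\rowsp(X)\subseteq\colsp(V_1)$) is contained in that for $V_*$. Hence the minimum attained with $V_*$ is at most the minimum attained with $V_1$, i.e.\ $\|[AV_*]_kV_*^\top-A\|_F\le\|[AV_1]_kV_1^\top-A\|_F$.

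I expect the main obstacle to be the careful proof of the key lemma: verifying the orthogonality/Pythagorean step, checking that it is valid whether or not $\rank(S_iA)$ exceeds $k$, and confirming that ``$\rowsp(X)\subseteq\colsp(V)$'' (rather than a column-space condition) is precisely the constraint characterizing the output of Algorithm~\ref{alg:sketch_works}. Once that lemma is in hand, the rest is bookkeeping about row spaces and monotonicity of a minimum over nested sets.
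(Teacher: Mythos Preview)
Your proposal is correct and follows essentially the same approach as the paper: you establish $\colsp(V_1)\subseteq\colsp(V_*)$ via the row-space inclusion, invoke the key lemma that $[AV]_kV^\top$ is the best rank-$k$ approximation of $A$ with row space contained in $\colsp(V)$, and conclude by monotonicity over nested feasible sets. Your sketch of the lemma's proof (parametrize $X=CV^\top$, apply the Pythagorean identity, then Eckart--Young) also matches the paper's own proof of that lemma in the appendix.
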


In particular, the above theorem implies that the output of Algorithm~\ref{alg:sketch_works} with the sketching matrix $S_*$ is a better rank-$k$ approximation to $A$ compared to the output of the algorithm with $S_1$.
In the rest of this section we prove Theorem~\ref{thm:worst-case-bound}. 

Before proving the main theorem, we  state the following helpful lemma.
\begin{lemma} [Lemma~4.3 in~\citep{clarkson2009numerical}]\label{lem:rank-k-V}
Suppose that $V$ is a matrix with orthonormal columns. Then, a best rank-k approximation to $A$ in the
$\colsp(V)$ is given by $[AV]_k V^\top$.
\end{lemma}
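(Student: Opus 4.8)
The statement should be read as: among all matrices $B$ of rank at most $k$ whose row space is contained in $\colsp(V)$, the one minimizing $\|A-B\|_F$ is $[AV]_kV^\top$ — this is the sense in which the lemma is applied in the proof of Theorem~\ref{thm:worst-case-bound}, where the relevant subspace is $\rowsp(S_*A)=\colsp(V_*)$. The plan is to (i) reparametrize this feasible set by a single $n\times r$ factor, (ii) peel off the component of $A$ orthogonal to $\colsp(V)$ via a Pythagorean identity, and (iii) invoke the Eckart--Young theorem on what remains.

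First I would write $V\in\R^{d\times r}$ for the matrix with orthonormal columns and observe that $B$ satisfies $\rowsp(B)\subseteq\colsp(V)$ if and only if $B=YV^\top$ for some $Y\in\R^{n\times r}$: one direction is immediate, and for the other, $\rowsp(B)\subseteq\colsp(V)$ forces $B=B(VV^\top)=YV^\top$ with $Y:=BV$. Since $V^\top V=I_r$, we recover $Y=(YV^\top)V$, so $\rank(YV^\top)=\rank(Y)$; hence the constraint ``$\rank(B)\le k$ and $\rowsp(B)\subseteq\colsp(V)$'' is precisely ``$B=YV^\top$ with $\rank(Y)\le k$'', and the problem becomes $\min_{\rank(Y)\le k}\|A-YV^\top\|_F$.

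Next I would split $A-YV^\top=(AV-Y)V^\top+A(I-VV^\top)$. The rows of the first term lie in $\colsp(V)$, while $A(I-VV^\top)$ annihilates $V$ on the right because $(I-VV^\top)V=0$, so the two terms are Frobenius-orthogonal; concretely $\mathrm{tr}\big(V(AV-Y)^\top A(I-VV^\top)\big)=\mathrm{tr}\big((AV-Y)^\top A(I-VV^\top)V\big)=0$. This yields $\|A-YV^\top\|_F^2=\|(AV-Y)V^\top\|_F^2+\|A(I-VV^\top)\|_F^2=\|AV-Y\|_F^2+\|A(I-VV^\top)\|_F^2$, using $\|MV^\top\|_F^2=\mathrm{tr}(MV^\top VM^\top)=\|M\|_F^2$. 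The last summand is independent of $Y$, so it remains to solve $\min_{\rank(Y)\le k}\|AV-Y\|_F$, whose minimizer is $Y=[AV]_k$ by Eckart--Young; substituting back gives $B=[AV]_kV^\top$.

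The only step that needs genuine care is the rank bookkeeping in the reparametrization — making sure the rank-$k$ constraint on $B$ transfers to $Y$ with no loss, which is exactly what the identity $Y=(YV^\top)V$ supplies; once that is in place, the remainder is the textbook projection-plus-Eckart--Young computation, with the orthogonality check above being the only other line worth spelling out.
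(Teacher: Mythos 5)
Your proof is correct and takes essentially the same route as the paper's: the same orthogonal (Pythagorean) split of $A-YV^\top$ into the component $(AV-Y)V^\top$ lying in $\colsp(V)$ and the $Y$-independent remainder $A(I-VV^\top)$, followed by reducing to the best rank-$k$ approximation of $AV$. The only difference is presentational — you make explicit the reparametrization $B=YV^\top$ with rank preserved (which the paper leaves implicit) and invoke Eckart--Young by name rather than the defining property of $[AV]_k$.
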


Since the above statement is a transposed version of the lemma from~\citep{clarkson2009numerical}, we include the proof in the appendix for completeness.
\begin{proofof}{Theorem~\ref{thm:worst-case-bound}.}
First, we show that $\colsp(V_1) \subseteq \colsp(V_*)$. By the properties of the (compact) SVD, $\colsp(V_1) = \rowsp(S_1 A)$ and $\colsp(V_*) = \rowsp(S_* A)$. Since, $S_*$ has all rows of $S_1$, then 
\begin{align}\label{eq:subset}
\colsp(V_1) \subseteq \colsp(V_*).
\end{align}
By Lemma~\ref{lem:rank-k-V}, 
\begin{align*}
||A - [AV_*]_k V_*^\top||_F &= \min_{\substack{\rowsp(X)\subseteq \colsp(V_*); \\ \text{rank}(X)\leq k}}
||X - A||_F \\
||A - [AV_1]_k V_1^\top||_F &=  \min_{\substack{\rowsp(X)\subseteq \colsp(V_1); \\ \text{rank}(X)\leq k}}||X - A||_F
\end{align*}
Finally, together with \eqref{eq:subset},
\begin{align*}
||A - [AV_*]_k V_*^\top||_F &= \min_{\substack{\rowsp(X)\subseteq \colsp(V_*); \\ \text{rank}(X)\leq k}}
||X - A||_F\\ 
&\leq \min_{\substack{\rowsp(X)\subseteq \colsp(V_1); \\ \text{rank}(X)\leq k}}
||X - A||_F = ||A - [AV_1]_k V_1^\top||_F. 
\end{align*}
which completes the proof.
\end{proofof}

Finally, we note that the property of Theorem~\ref{thm:worst-case-bound} is not universal, i.e., it does not hold for all sketching algorithms for low-rank decomposition. \newchange{For example, an alternative algorithm proposed in \citep{cohen2015dimensionality} proceeds by letting $Z$ to be the top $k$ singular vectors of $SA$ (i.e., $Z = V$ where $[SA]_k = U\Sigma V^T$) and then reports $AZZ^{\top}$. It  is not difficult to see that, by adding extra rows to the sketching matrix $S$ (which may change all top $k$ singular vectors compared to the ones of $SA$), one can skew the output of the algorithm so that it is far from the optimal.}

\section{Experimental Results}
\label{sec:exp}
The main question considered in this paper is whether, for natural matrix datasets, optimizing the sketch matrix $S$ can improve the performance of the sketching algorithm for the low-rank decomposition problem. To answer this question, we implemented and compared the following methods for computing $S\in \R^{m \times n}$.

\begin{itemize}\setlength\itemsep{-0.04in}
	\item \textbf{Sparse Random}. 
	Sketching matrices are generated at random as in \citep{clarksonwoodruff}.
	Specifically, we select a random hash function $h: [n] \rightarrow [m]$, and for all $i=1\ldots n$,  $S_{h[i], i}$ is selected to be either $+1$ or $-1$ with  equal probability. All other entries in $S$ are set to $0$. Therefore, $S$ has exactly $n$ non-zero entries. 
	\item \textbf{Dense Random}. 
All the $nm$ entries in the
sketching matrices are sampled from Gaussian distribution (we include this method for comparison).
	
	\item \textbf{Learned}. Using the sparse random matrix as the initialization, we run Algorithm \ref{alg:optimize_S} to optimize the sketching matrix using the training set, and return the optimized matrix.
	
	\item \textbf{Mixed (J)}. We first generate two sparse random matrices $S_1, S_2\in \R^{\frac{m}2 \times n}$ (assuming $m$ is even),
	and define $S$ to be their combination. We 
	 then run Algorithm \ref{alg:optimize_S} to optimize $S$ using the training set, but only $S_1$ will be updated, while $S_2$ is fixed. Therefore, $S$ is a mixture of learned matrix and random matrix, and the first matrix is trained {\em jointly} with the second one.
	\item \textbf{Mixed (S)}. We first compute a learned matrix $S_1\in \R^{\frac{m}2 \times n}$ using the training set, and then append another sparse random matrix $S_2$ to get $S\in \R^{m\times n}$. Therefore, $S$ is a mixture of learned matrix and random matrix, but the learned matrix is trained {\em separately}.
	
\end{itemize}
\begin{figure}[th]
	\centering
	\includegraphics[width=\textwidth]{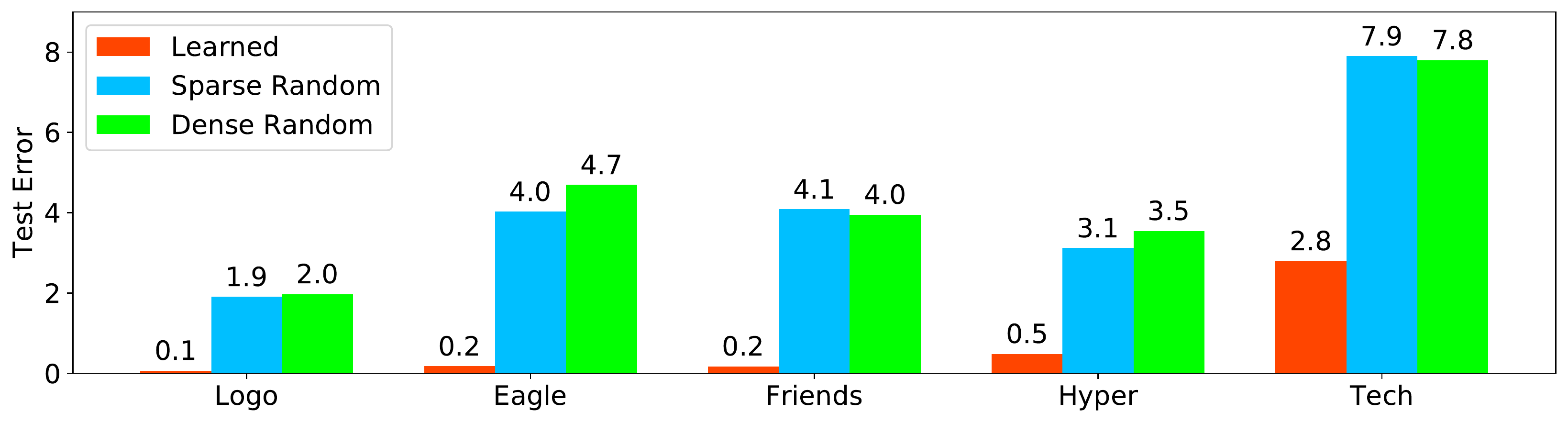}
	\caption{Test error by datasets and sketching matrices for $k=10, m=20$}
	\label{fig:bar}
\end{figure}

\begin{figure}[th]
	\centering
	\begin{subfigure}[b]{0.32\textwidth}
		\centering
		\includegraphics[width=\textwidth]{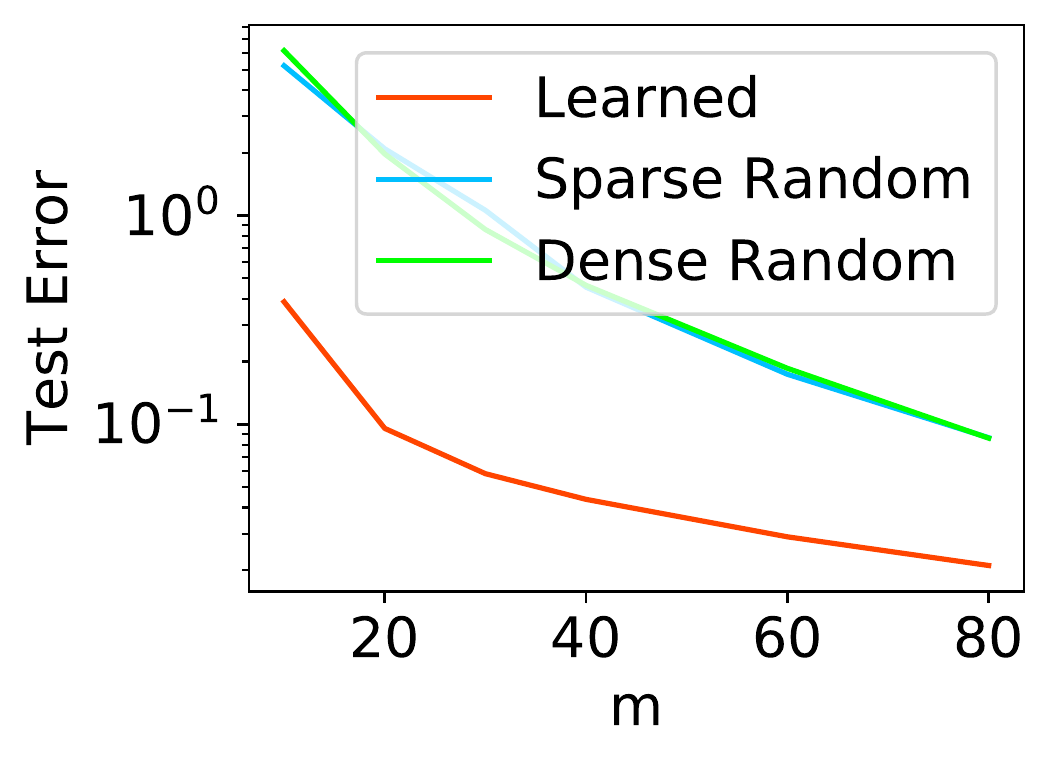}
	\end{subfigure}
	\begin{subfigure}[b]{0.32\textwidth}
		\centering
		\includegraphics[width=\textwidth]{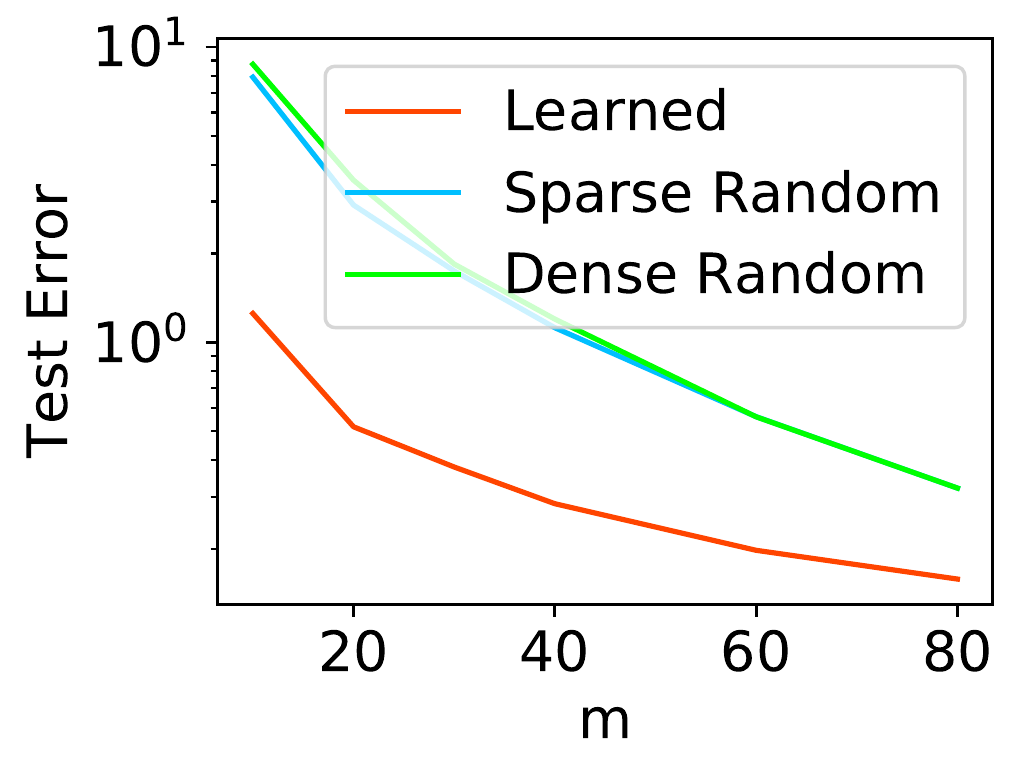}
	\end{subfigure}
	\begin{subfigure}[b]{0.32\textwidth}
		\centering
		\includegraphics[width=\textwidth]{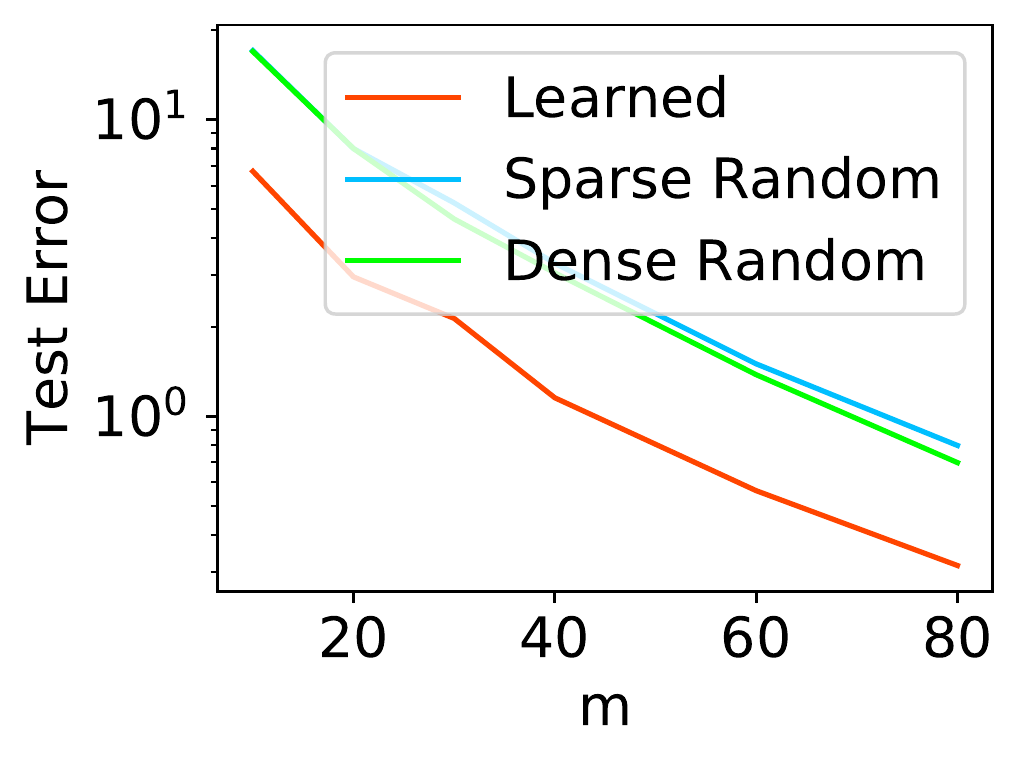}
	\end{subfigure}
\caption{
Test error for Logo (left), Hyper (middle) and Tech (right) when $k=10$.	
}
\label{fig:vary_m}
\end{figure}
\paragraph{Datasets.}
We used a variety of datasets to 
test the performance of our methods: 
\begin{itemize}
\item \textbf{Videos\footnote{They can be downloaded from \url{http://youtu.be/L5HQoFIaT4I}, \url{http://youtu.be/xmLZsEfXEgE} and \url{http://youtu.be/ufnf_q_3Ofg}}:
Logo, 
Friends, 
Eagle.} 
We downloaded three high resolution videos from Youtube, including  logo video, Friends TV show, and eagle nest cam. From each video, we collect $500$ frames of size $1920\times 1080\times 3$ pixels, and use $400$ ($100$) matrices as the training (test) set. For each frame, we resize it as a $5760\times 1080$ matrix. 
\item \textbf{Hyper.} 
We use matrices from HS-SOD, a dataset for 
hyperspectral images from natural scenes \citep{hyperpaper}. Each matrix has $1024\times 768$ pixels, and we use $400$ ($100$) matrices as the training (test) set. 
\item \textbf{Tech.} We use matrices from TechTC-300, a dataset
for text categorization
 \citep{techtc300}. Each  matrix has $835,422$ rows, but on average only $25,389$ of the rows contain
non-zero entries. On average each matrix has $195$ columns.
We use $200$ ($95$) matrices as the training (test) set. 
\end{itemize}

\paragraph{Evaluation metric.}
To evaluate the quality of a sketching matrix $S$, it suffices to evaluate the output of Algorithm \ref{alg:sketch_works} using the sketching matrix $S$ on different input matrices $A$. We first define the optimal approximation loss for test set $\te$  as follows: 
$
\textrm{App}_{\te}^*\triangleq \E_{A\sim \te}
\|A-[A]_k\|_F.
$

Note that $\textrm{App}_{\te}^*$ does not depend on $S$, and in general it is not achievable by any sketch $S$ with $m<d$, because of information loss. 
Based on the definition of the optimal approximation loss, we define the error of the sketch $S$ for $\te$ as 
$
\textrm{Err}(\te, S)\triangleq 
\E_{A\sim \te}
\|A-\textsc{SCW}(S,A)\|_F- \textrm{App}_{\te}^*.
$

In our datasets, some of the matrices have much larger singular values than the others. To avoid imbalance in the dataset, we normalize the matrices so that their top singular values are all equal.

\begin{figure}[!h]
	\centering
	\begin{subfigure}[b]{0.328\textwidth}
		\centering
		\includegraphics[width=\textwidth]{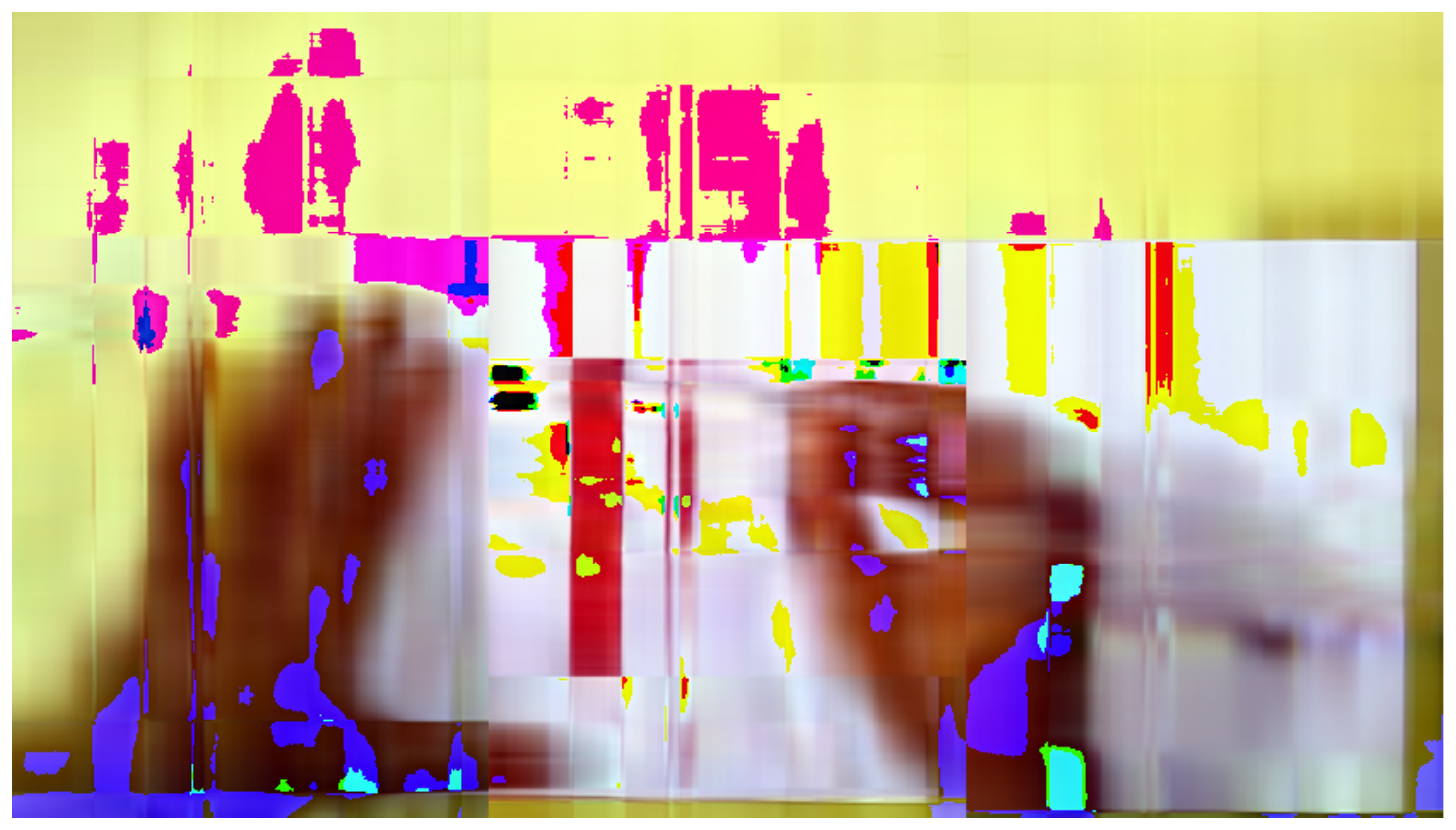}
	\end{subfigure}
	\begin{subfigure}[b]{0.328\textwidth}
		\centering
		\includegraphics[width=\textwidth]{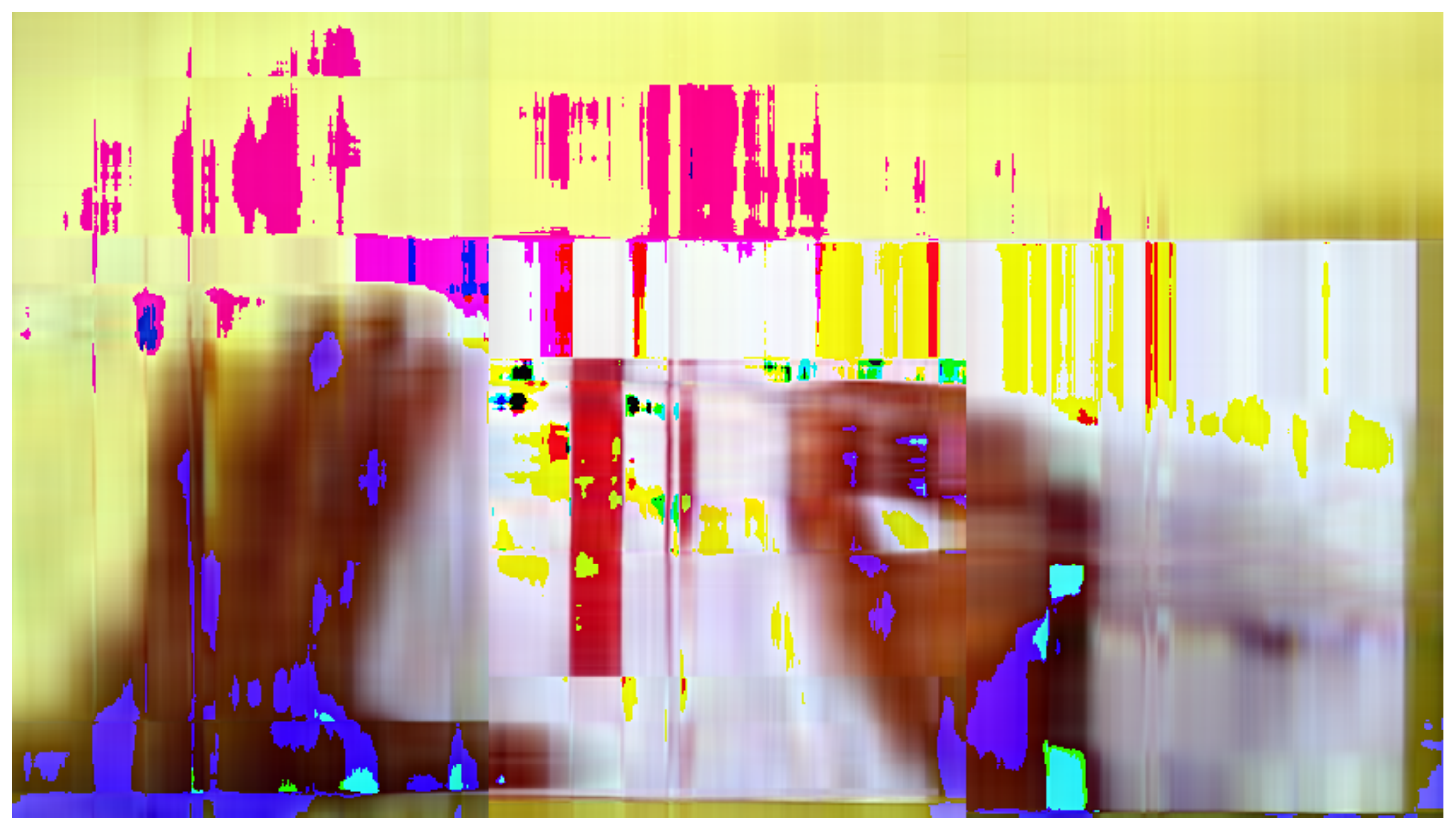}
	\end{subfigure}
	\begin{subfigure}[b]{0.328\textwidth}
		\centering
		\includegraphics[width=\textwidth]{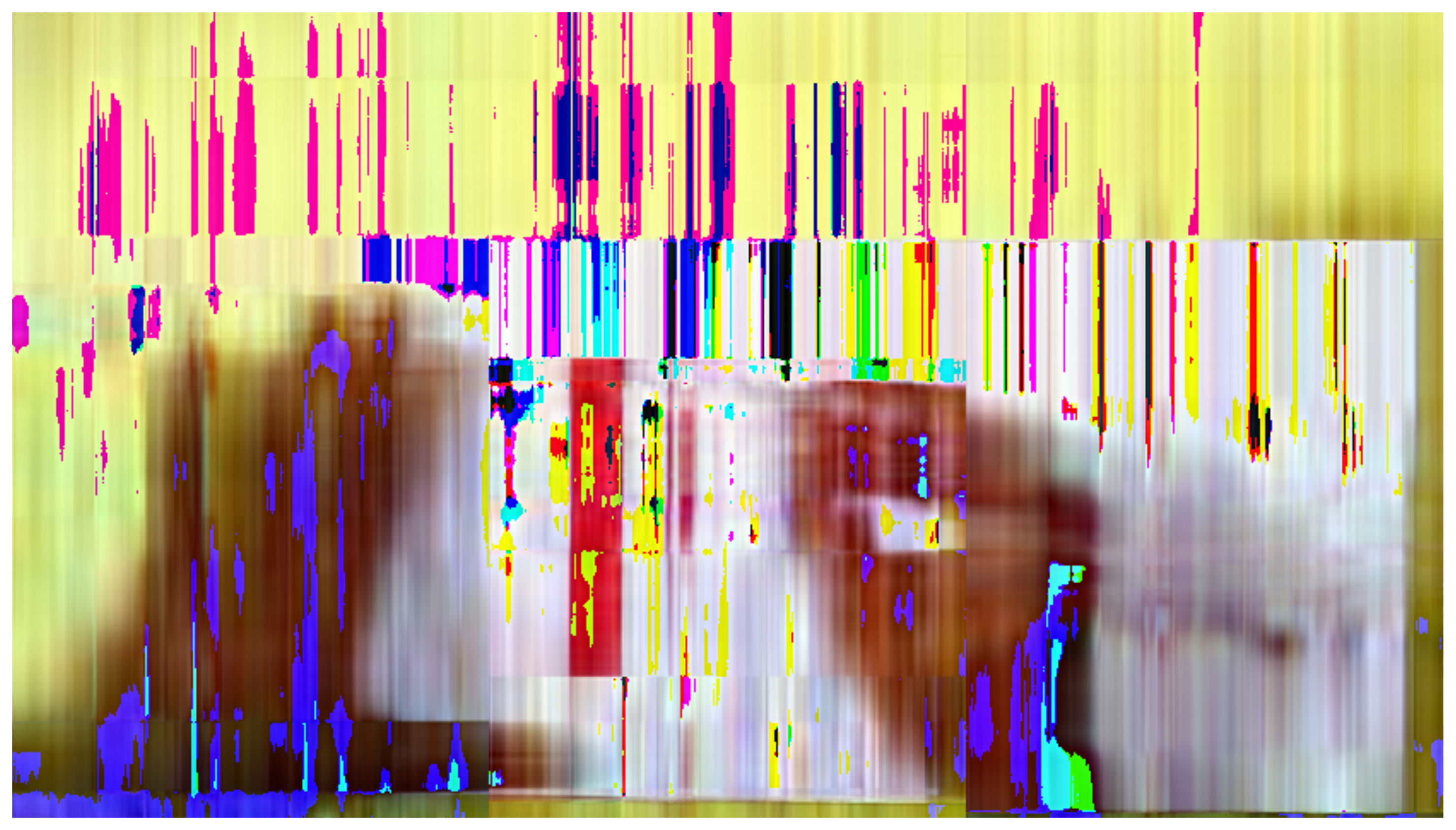}
	\end{subfigure}
	\caption{
		Low rank approximation results for Logo video frame: the best rank-$10$ approximation (left), and rank-$10$ approximations reported by Algorithm~\ref{alg:sketch_works} using a sparse learned sketching matrix (middle) and a sparse
		random sketching matrix (right).
	}
	\label{fig:k_10_plot}
\end{figure}

\begin{table}
	\hspace{-0.18in}
	\parbox{.6\linewidth}{
		\centering
		\caption{Test error in various settings}
		\label{tab:test_error}
\resizebox{0.585\textwidth}{!}{	
\renewcommand{\arraystretch}{1.1}
\begin{tabular}{|l|l|l|l|l|l|}
\hline
\;\small{\bf $\boldsymbol{k,m,}$ Sketch}&\small{\bf Logo}&\small{\bf Eagle}&\small{\bf Friends}&\small{\bf Hyper}&\small{\bf Tech}\\\hline
$10,10,\mathrm{Learned}$&\textbf{0.39}&\textbf{0.31}&\textbf{1.03}&\textbf{1.25}&\textbf{6.70}\\
$10,10,\mathrm{Random}$&5.22&6.33&11.56&7.90&17.08\\
\hline
$10,20,\mathrm{Learned}$&\textbf{0.10}&\textbf{0.18}&\textbf{0.22}&\textbf{0.52}&\textbf{2.95}\\
$10,20,\mathrm{Random}$&2.09&4.31&4.11&2.92&7.99\\
\hline
$20,20,\mathrm{Learned}$&\textbf{0.61}&\textbf{0.66}&\textbf{1.41}&\textbf{1.68}&\textbf{7.79}\\
$20,20,\mathrm{Random}$&4.18&5.79&9.10&5.71&14.55\\
\hline
$20,40,\mathrm{Learned}$&\textbf{0.18}&\textbf{0.41}&\textbf{0.42}&\textbf{0.72}&\textbf{3.09}\\
$20,40,\mathrm{Random}$&1.19&3.50&2.44&2.23&6.20\\
\hline
$30,30,\mathrm{Learned}$&\textbf{0.72}&\textbf{1.06}&\textbf{1.78}&\textbf{1.90}&\textbf{7.14}\\
$30,30,\mathrm{Random}$&3.11&6.03&6.27&5.23&12.82\\
\hline
$30,60,\mathrm{Learned}$&\textbf{0.21}&\textbf{0.61}&\textbf{0.42}&\textbf{0.84}&\textbf{2.78}\\
$30,60,\mathrm{Random}$&0.82&3.28&1.79&1.88&4.84\\
\hline
\end{tabular}}
	}
	\hspace{-0.1in}
	\parbox{.45\linewidth}{
		\centering
				\caption{Comparison with mixed sketches}		
						\label{tab:mix}
\resizebox{0.433\textwidth}{!}{	
\renewcommand{\arraystretch}{1.1}
\begin{tabular}{|l|l|l|l|l|l|}
\hline
\;\small{\bf $\boldsymbol{k,m,}$ Sketch}&\small{\bf Logo}&\small{\bf Hyper}&\small{\bf Tech}\\\hline
$10,10,\mathrm{Learned}$&\textbf{0.39}&\textbf{1.25}&\textbf{6.70}\\
$10,10,\mathrm{Random}$&5.22&7.90&17.08\\
\hline
$10,20,\mathrm{Learned}$&\textbf{0.10}&\textbf{0.52}&\textbf{2.95}\\
$10,20,\mathrm{Mixed~(J)}$&0.20&0.78&3.73\\
$10,20,\mathrm{Mixed~(S)}$&0.24&0.87&3.69\\
$10,20,\mathrm{Random}$&2.09&2.92&7.99\\
\hline
$10,40,\mathrm{Learned}$&\textbf{0.04}&\textbf{0.28}&\textbf{1.16}\\
$10,40,\mathrm{Mixed~(J)}$&0.05&0.34&1.31\\
$10,40,\mathrm{Mixed~(S)}$&0.05&0.34&1.20\\
$10,40,\mathrm{Random}$&0.45&1.12&3.28\\
\hline
$10,80,\mathrm{Learned}$&\textbf{0.02}&\textbf{0.16}&\textbf{0.31}\\
$10,80,\mathrm{Random}$&0.09&0.32&0.80\\
\hline
\end{tabular}}
	}
\end{table}

\subsection{Average test error}
We first test all methods on different datasets, with various combination of $k,m$. See Figure \ref{fig:bar} for the results when $k=10, m=20$. As we can see, for video datasets,  learned sketching matrices can get $20\times$ better test error than the sparse random or dense random sketching matrices. For other datasets, learned sketching matrices are still more than $2\times$ better. 
\newchange{In this experiment, we have run each configuration $5$ times, and computed the standard error of each test error\footnote{They were very small, so we did not plot in the figures}. For Logo, Eagle, Friends, Hyper and Tech, the standard errors of {\em learned}, {\em sparse random} and {\em dense random} sketching matrices are respectively, $(1.5, 8.4, 35.3, 124, 41)\times 10^{-6}$, 
$(3.1, 5.3, 7.0, 2.9, 4.5)\times 10^{-2}$ and $(3.5, 18.1, 4.6, 10.7, 3.3)\times 10^{-2}$. It is clear that the standard error of the learned sketching matrix is a few order of magnitudes smaller than the random sketching matrices, which shows another benefit of learning sketching matrices. 

Similar improvement of the learned sketching matrices over the random sketching matrices can be observed when $k=10, m=10,20,30,40,\cdots, 80$, see Figure~\ref{fig:vary_m}. We also include the test error results in Table \ref{tab:test_error} for the case when $k=20, 30$. Finally, in Figure \ref{fig:k_10_plot}, we visualize an example output of the algorithm for the case $k=10, m=20$ for the Logo dataset.} 
	
\subsection{Comparing Random, Learned and Mixed}
In Table \ref{tab:mix}, we investigate the performance of the mixed sketching matrices by comparing them with random  and learned sketching matrices. In all scenarios, the mixed sketching matrices yield much better results than the random sketching matrices, and sometimes the results are comparable to those of learned sketching matrices. This means, in most cases it suffices to train half of the sketching matrix to obtain good empirical results, and at the same time, by our Theorem \ref{thm:worst-case-bound}, we can use the remaining random half of the sketching matrix to obtain worst-case guarantees. 

\newchange{Moreover, if we do not fix the number of learned rows to be half, the test error increases as the number of learned rows decreases.  In Figure~\ref{fig:varylearned}, we plot the test error for the setting  with $m=20, k=10$ using $100$ Logo matrices, running for $3000$ iterations.

\begin{figure}[!h]
	\centering
	\includegraphics[width=.4\textwidth]{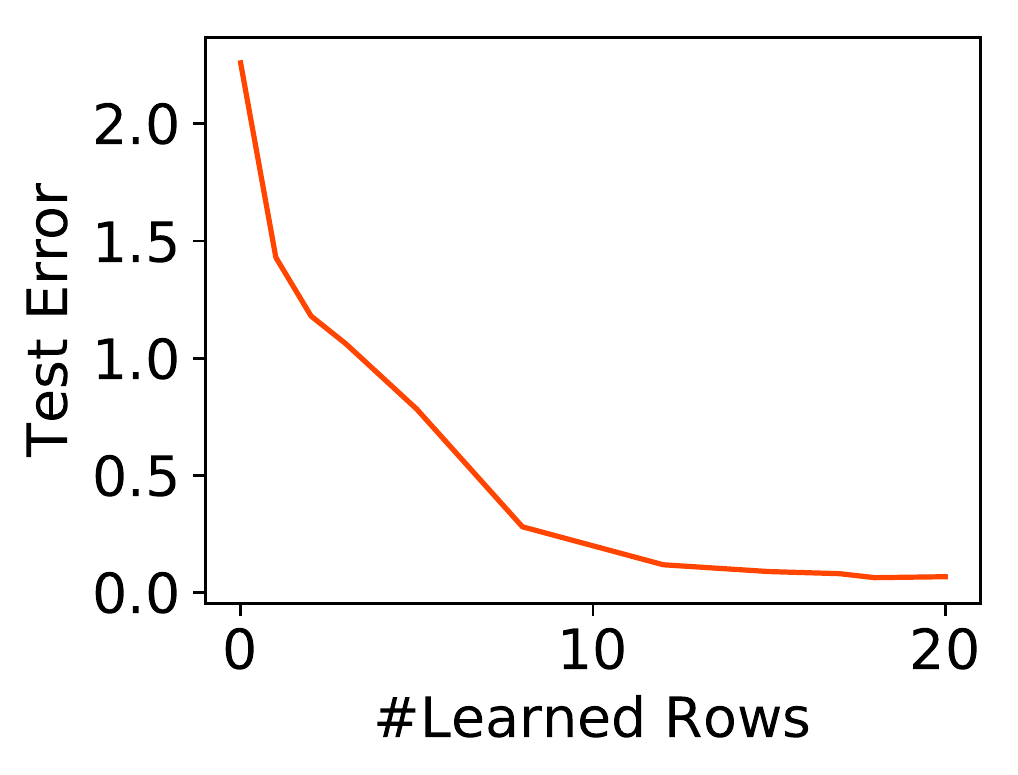}
	\captionof{figure}{Test errors of mixed sketching matrices with different number of ``learned'' rows.}
	\label{fig:varylearned}
\end{figure}

\subsection{Mixing Training Sets}
In our previous experiments, 
we constructed a different learned sketching matrix $S$ for each data set. However, one can use a {\em single} random sketching matrix for all three data sets simultaneously. Next, we study the performance of a {\em single} learned sketching matrix for all three data sets.  
In Table~\ref{tab:mix_training}, 
we constructed a single learned sketching matrix $S$ with $m=k=10$ on a training set containing $300$ matrices from Logo, Eagle and Friends (each has $100$ matrices). Then, we tested $S$ on Logo matrices and compared its performance to the performance of a learned sketching matrix $S_L$ trained on Logo dataset (i.e., using $100$ Logo matrices only), as well as to the performance of a random sketching $S_R$. 
The performance of the sketching matrix $S$ with a mixed training set from all three datasets is close to the performance of the sketching matrix $S_L$ with training set only from Logo dataset, and is much better than the performance of the random sketching matrix $S_R$. 

\begin{table}[!h]
	\vspace{.1in}
	\centering
	\caption{Evaluation of the sketching matrix trained on different sets}
	\label{tab:mix_training}
	\begin{tabular}{|l|l|l|l|}
		\hline
		& Logo+Eagle+Friends  & Logo only  & Random \\ \hline
		Test Error &    0.67   &  0.27   &     5.19   \\ \hline
	\end{tabular}
\end{table}

\subsection{Running Time}
The runtimes of the algorithm with a random sketching matrix and our learned sketching matrix are the same, and are much less than the runtime of the ``standard'' SVD method (implemented in Pytorch). In Table~\ref{tab:runningtime}, we present the runtimes of the algorithm with different types of sketching matrices (i.e., {\em learned} and {\em random}) on Logo matrices with $m=k=10$, as well as the training time of the learned case. 
Notice that training only needs to be done \emph{once}, and can be done offline.
}

\begin{table}[!h]
	\vspace{.1in}
	\centering
	\caption{Runtimes of the algorithm with different sketching matrices}
	\label{tab:runningtime}
	\begin{tabular}{|l|l|l|l|}
		\hline
		SVD  & Random & Learned-Inference & Learned-Training \\
		\hline
		2.2s& 0.03s & 0.03s           & 9481.25s   \\
		\hline
	\end{tabular}
\end{table}

\section{Conclusions}
In this paper we introduced a learning-based approach to sketching algorithms for computing low-rank decompositions. 
Such  algorithms proceed  by computing a projection $SA$, where $A$ is the input matrix and $S$ is a random ``sketching'' matrix.     
We showed how  to train $S$ using example matrices $A$ in order to improve the performance of the overall algorithm. Our experiments show that for several different types of datasets, a learned sketch can significantly reduce the approximation loss compared to a random matrix. Further, we showed that if we mix a random matrix and a learned matrix (by concatenation), the result still offers an improved performance while inheriting worst case guarantees of the random sketch component. 

\section*{Acknowledgment}
This research was supported by NSF TRIPODS award $\#1740751$ and Simons Investigator Award.
The authors would like to thank the anonymous reviewers for their insightful comments and suggestions. 

\bibliography{ref-learned-low-rank}

\begin{thebibliography}{35}
\providecommand{\natexlab}[1]{#1}
\providecommand{\url}[1]{\texttt{#1}}
\expandafter\ifx\csname urlstyle\endcsname\relax
  \providecommand{\doi}[1]{doi: #1}\else
  \providecommand{\doi}{doi: \begingroup \urlstyle{rm}\Url}\fi

\bibitem[Allen-Zhu and Li(2016)]{lazySVD}
Z.~Allen-Zhu and Y.~Li.
\newblock Lazysvd: even faster svd decomposition yet without agonizing pain.
\newblock In \emph{Advances in Neural Information Processing Systems}, pages
  974--982, 2016.

\bibitem[Balcan et~al.(2018)Balcan, Dick, Sandholm, and
  Vitercik]{balcan2018learning}
M.-F. Balcan, T.~Dick, T.~Sandholm, and E.~Vitercik.
\newblock Learning to branch.
\newblock In \emph{International Conference on Machine Learning}, pages
  353--362, 2018.

\bibitem[Baldassarre et~al.(2016)Baldassarre, Li, Scarlett, G{\"o}zc{\"u},
  Bogunovic, and Cevher]{baldassarre2016learning}
L.~Baldassarre, Y.-H. Li, J.~Scarlett, B.~G{\"o}zc{\"u}, I.~Bogunovic, and
  V.~Cevher.
\newblock Learning-based compressive subsampling.
\newblock \emph{IEEE Journal of Selected Topics in Signal Processing},
  10\penalty0 (4):\penalty0 809--822, 2016.

\bibitem[Bora et~al.(2017)Bora, Jalal, Price, and Dimakis]{bora2017compressed}
A.~Bora, A.~Jalal, E.~Price, and A.~G. Dimakis.
\newblock Compressed sensing using generative models.
\newblock In \emph{International Conference on Machine Learning}, pages
  537--546, 2017.

\bibitem[Boutsidis and Gittens(2013)]{boutsidis2013improved}
C.~Boutsidis and A.~Gittens.
\newblock Improved matrix algorithms via the subsampled randomized hadamard
  transform.
\newblock \emph{SIAM Journal on Matrix Analysis and Applications}, 34\penalty0
  (3):\penalty0 1301--1340, 2013.

\bibitem[Cand{\`e}s et~al.(2006)Cand{\`e}s, Romberg, and Tao]{candes2006robust}
E.~J. Cand{\`e}s, J.~Romberg, and T.~Tao.
\newblock Robust uncertainty principles: Exact signal reconstruction from
  highly incomplete frequency information.
\newblock \emph{IEEE Transactions on information theory}, 52\penalty0
  (2):\penalty0 489--509, 2006.

\bibitem[Clarkson and Woodruff(2009)]{clarkson2009numerical}
K.~L. Clarkson and D.~P. Woodruff.
\newblock Numerical linear algebra in the streaming model.
\newblock In \emph{Proceedings of the forty-first annual symposium on Theory of
  computing (STOC)}, pages 205--214, 2009.

\bibitem[Clarkson and Woodruff(2017)]{clarksonwoodruff}
K.~L. Clarkson and D.~P. Woodruff.
\newblock Low-rank approximation and regression in input sparsity time.
\newblock \emph{Journal of the ACM (JACM)}, 63\penalty0 (6):\penalty0 54, 2017.

\bibitem[Cohen et~al.(2015)Cohen, Elder, Musco, Musco, and
  Persu]{cohen2015dimensionality}
M.~B. Cohen, S.~Elder, C.~Musco, C.~Musco, and M.~Persu.
\newblock Dimensionality reduction for k-means clustering and low rank
  approximation.
\newblock In \emph{Proceedings of the forty-seventh annual ACM symposium on
  Theory of computing}, pages 163--172, 2015.

\bibitem[Davidov et~al.(2004)Davidov, Gabrilovich, and Markovitch]{techtc300}
D.~Davidov, E.~Gabrilovich, and S.~Markovitch.
\newblock Parameterized generation of labeled datasets for text categorization
  based on a hierarchical directory.
\newblock In \emph{Proceedings of the 27th Annual International ACM SIGIR
  Conference on Research and Development in Information Retrieval}, SIGIR '04,
  pages 250--257, 2004.

\bibitem[Donoho(2006)]{donoho2006compressed}
D.~L. Donoho.
\newblock Compressed sensing.
\newblock \emph{IEEE Transactions on information theory}, 52\penalty0
  (4):\penalty0 1289--1306, 2006.

\bibitem[Ghashami and Phillips(2014)]{ghashami2014relative}
M.~Ghashami and J.~M. Phillips.
\newblock Relative errors for deterministic low-rank matrix approximations.
\newblock In \emph{Proceedings of the twenty-fifth annual ACM-SIAM symposium on
  Discrete algorithms (SODA)}, pages 707--717, 2014.

\bibitem[Ghashami et~al.(2016)Ghashami, Liberty, Phillips, and
  Woodruff]{ghashami2016frequent}
M.~Ghashami, E.~Liberty, J.~M. Phillips, and D.~P. Woodruff.
\newblock Frequent directions: Simple and deterministic matrix sketching.
\newblock \emph{SIAM Journal on Computing}, 45\penalty0 (5):\penalty0
  1762--1792, 2016.

\bibitem[Gollapudi and Panigrahi(2019)]{gollapudi2019online}
S.~Gollapudi and D.~Panigrahi.
\newblock Online algorithms for rent-or-buy with expert advice.
\newblock In \emph{International Conference on Machine Learning}, pages
  2319--2327, 2019.

\bibitem[Halko et~al.(2011)Halko, Martinsson, and Tropp]{halko2011finding}
N.~Halko, P.-G. Martinsson, and J.~A. Tropp.
\newblock Finding structure with randomness: Probabilistic algorithms for
  constructing approximate matrix decompositions.
\newblock \emph{SIAM review}, 53\penalty0 (2):\penalty0 217--288, 2011.

\bibitem[Hand and Voroninski(2018)]{hand2017global}
P.~Hand and V.~Voroninski.
\newblock Global guarantees for enforcing deep generative priors by empirical
  risk.
\newblock In \emph{Conference On Learning Theory}, 2018.

\bibitem[Har{-}Peled et~al.(2012)Har{-}Peled, Indyk, and
  Motwani]{har2012approximate}
S.~Har{-}Peled, P.~Indyk, and R.~Motwani.
\newblock Approximate nearest neighbor: Towards removing the curse of
  dimensionality.
\newblock \emph{Theory of computing}, 8\penalty0 (1):\penalty0 321--350, 2012.

\bibitem[Hsu et~al.(2019)Hsu, Indyk, Katabi, and Vakilian]{hsu2018learning}
C.-Y. Hsu, P.~Indyk, D.~Katabi, and A.~Vakilian.
\newblock Learning-based frequency estimation algorithms.
\newblock \emph{International Conference on Learning Representations}, 2019.

\bibitem[Imamoglu et~al.(2018)Imamoglu, Oishi, Zhang, Ding, Fang, Kouyama, and
  Nakamura]{hyperpaper}
N.~Imamoglu, Y.~Oishi, X.~Zhang, G.~Ding, Y.~Fang, T.~Kouyama, and R.~Nakamura.
\newblock Hyperspectral image dataset for benchmarking on salient object
  detection.
\newblock In \emph{Tenth International Conference on Quality of Multimedia
  Experience, (QoMEX)}, pages 1--3, 2018.

\bibitem[Khalil et~al.(2017)Khalil, Dai, Zhang, Dilkina, and
  Song]{khalil2017learning}
E.~Khalil, H.~Dai, Y.~Zhang, B.~Dilkina, and L.~Song.
\newblock Learning combinatorial optimization algorithms over graphs.
\newblock In \emph{Advances in Neural Information Processing Systems}, pages
  6348--6358, 2017.

\bibitem[Khani et~al.(2019)Khani, Alizadeh, Hoydis, and Fleming]{khani19}
M.~Khani, M.~Alizadeh, J.~Hoydis, and P.~Fleming.
\newblock Adaptive neural signal detection for massive {MIMO}.
\newblock \emph{CoRR}, abs/1906.04610, 2019.

\bibitem[Kraska et~al.(2018)Kraska, Beutel, Chi, Dean, and
  Polyzotis]{kraska2017case}
T.~Kraska, A.~Beutel, E.~H. Chi, J.~Dean, and N.~Polyzotis.
\newblock The case for learned index structures.
\newblock In \emph{Proceedings of the 2018 International Conference on
  Management of Data}, pages 489--504, 2018.

\bibitem[Liberty(2013)]{liberty2013simple}
E.~Liberty.
\newblock Simple and deterministic matrix sketching.
\newblock In \emph{Proceedings of the 19th ACM SIGKDD international conference
  on Knowledge discovery and data mining}, pages 581--588, 2013.

\bibitem[Lykouris and Vassilvitskii(2018)]{lykouris2018competitive}
T.~Lykouris and S.~Vassilvitskii.
\newblock Competitive caching with machine learned advice.
\newblock In \emph{International Conference on Machine Learning}, pages
  3302--3311, 2018.

\bibitem[Meng and Mahoney(2013)]{meng2013low}
X.~Meng and M.~W. Mahoney.
\newblock Low-distortion subspace embeddings in input-sparsity time and
  applications to robust linear regression.
\newblock In \emph{Proceedings of the forty-fifth annual ACM symposium on
  Theory of computing}, pages 91--100, 2013.

\bibitem[Metzler et~al.(2017)Metzler, Mousavi, and
  Baraniuk]{metzler2017learned}
C.~Metzler, A.~Mousavi, and R.~Baraniuk.
\newblock Learned d-amp: Principled neural network based compressive image
  recovery.
\newblock In \emph{Advances in Neural Information Processing Systems}, pages
  1772--1783, 2017.

\bibitem[Mitzenmacher(2018)]{mitz2018model}
M.~Mitzenmacher.
\newblock A model for learned bloom filters and optimizing by sandwiching.
\newblock In \emph{Advances in Neural Information Processing Systems}, pages
  464--473, 2018.

\bibitem[Mousavi et~al.(2015)Mousavi, Patel, and Baraniuk]{mousavi2015deep}
A.~Mousavi, A.~B. Patel, and R.~G. Baraniuk.
\newblock A deep learning approach to structured signal recovery.
\newblock In \emph{Communication, Control, and Computing (Allerton), 2015 53rd
  Annual Allerton Conference on}, pages 1336--1343. IEEE, 2015.

\bibitem[Nelson and Nguy{\^e}n(2013)]{nelson2013osnap}
J.~Nelson and H.~L. Nguy{\^e}n.
\newblock Osnap: Faster numerical linear algebra algorithms via sparser
  subspace embeddings.
\newblock In \emph{Foundations of Computer Science (FOCS), 2013 IEEE 54th
  Annual Symposium on}, pages 117--126, 2013.

\bibitem[Purohit et~al.(2018)Purohit, Svitkina, and
  Kumar]{purohit2018improving}
M.~Purohit, Z.~Svitkina, and R.~Kumar.
\newblock Improving online algorithms via ml predictions.
\newblock In \emph{Advances in Neural Information Processing Systems}, pages
  9661--9670, 2018.

\bibitem[Sarlos(2006)]{sarlos2006improved}
T.~Sarlos.
\newblock Improved approximation algorithms for large matrices via random
  projections.
\newblock In \emph{47th Annual IEEE Symposium on Foundations of Computer
  Science (FOCS)}, pages 143--152, 2006.

\bibitem[Shalev-Shwartz and
  Ben-David(2014)]{understandingmachinelearningfromtheorytoalgorithms}
S.~Shalev-Shwartz and S.~Ben-David.
\newblock \emph{Understanding Machine Learning: From Theory to Algorithms}.
\newblock Cambridge University Press, 2014.

\bibitem[Wang et~al.(2016)Wang, Liu, Kumar, and Chang]{wang2016learning}
J.~Wang, W.~Liu, S.~Kumar, and S.-F. Chang.
\newblock Learning to hash for indexing big data - a survey.
\newblock \emph{Proceedings of the IEEE}, 104\penalty0 (1):\penalty0 34--57,
  2016.

\bibitem[Woodruff(2014)]{woodruff2014sketching}
D.~P. Woodruff.
\newblock Sketching as a tool for numerical linear algebra.
\newblock \emph{Foundations and Trends{\textregistered} in Theoretical Computer
  Science}, 10\penalty0 (1--2):\penalty0 1--157, 2014.

\bibitem[Woolfe et~al.(2008)Woolfe, Liberty, Rokhlin, and
  Tygert]{woolfe2008fast}
F.~Woolfe, E.~Liberty, V.~Rokhlin, and M.~Tygert.
\newblock A fast randomized algorithm for the approximation of matrices.
\newblock \emph{Applied and Computational Harmonic Analysis}, 25\penalty0
  (3):\penalty0 335--366, 2008.

\end{thebibliography}
\bibliographystyle{abbrvnat}

\newpage
\appendix
\section{The case of $m=1$}
In this section, we denote the SVD of $\A$ as $U^A \Sigma^A  (V^A)^\top$ 
such that both $U^A$ and $V^A$ have {\em orthonormal columns} and $\Sigma^A=\mathrm{diag}\{\lambda^A_1, \cdots, \lambda^A_d\}$ is a diagonal matrix with nonnegative entries. For simplicity, we assume that for all $\A\sim \D$,  $1=\lambda_1\geq \cdots \geq \lambda_d$.
We use $U_i^{\A}$ to denote the $i$-th column of $U^{\A}$, and similarly for $V_i^{\A}$. Denote $\Sigma^{\A}=\mathrm{diag}\{\lambda_1^{\A}, \cdots, \lambda_d^{\A}\}$.

We want to find $[A]_k$, the rank-$k$ approximation of $A$.
In general, it is hard to obtain a closed form expression of the output of Algorithm~\ref{alg:sketch_works}.
However, for  $m=1$, such expressions can be calculated.
Indeed, if $m=1$, the sketching matrix becomes a vector $s\in \R^{1\times n}$. 
Therefore $[AV]_{k}$ has rank at most one, so it suffices to set $k=1$. Consider a matrix $\A\sim \D$ as the input to Algorithm \ref{alg:sketch_works}.
By calculation, $SA=\sum_i \lambda_i^{\A} \langle s, U_i^{\A}\rangle  (V_i^{\A})^\top$, which is a vector. For example, if $S=U_1^A$, we obtain $\lambda_1^A (V_1^A)^\top $. Note that in this section to emphasize that $m=1$ (i.e., $S$ is a vector), we denote $S$ as $s$.
Since $SA$ is a vector, applying SVD on it is equivalent to performing normalization. Therefore, 
\begin{align*}
V={\sum_{i=1}^d \lambda_i^A \langle s, U_i^A\rangle (V_i^A)^\top   \over \sqrt{\sum_{i=1}^d  (\lambda_i^A)^2\langle s, U_i^A \rangle^2}}
\end{align*}
Ideally, we hope that $V$ is as close to $V_1^A$ as possible, because that means $[AV]_1 V^\top$ is close to  $\lambda_1^A U_1^A (V_1^A)^\top $, which captures the top singular component of $A$, i.e., the optimal solution. 
More formally,
\begin{align*}
AV ={\sum_{i=1}^d (\lambda^A_i)^2 \langle s, U^A_i\rangle
U^A_i \over \sqrt{\sum_{i=1}^d  (\lambda^A_i)^2\langle s, U^A_i \rangle^2}}
\end{align*}
We want to maximize its norm, which is: 
\begin{align}
{\sum_{i=1}^d (\lambda^A_i)^4 \langle s, U^A_i\rangle^2
	\over \sum_{i=1}^d  (\lambda^A_i)^2\langle s, U_i^A \rangle^2}
\label{eqn:full}
\end{align}
We note that one can simplify  (\ref{eqn:full}) by considering only the contribution from the top left singular vector $U_1^A$, which corresponds to the maximization of the following expression: 
\begin{align}
{ \langle s, U_1^A\rangle^2
	\over \sum_{i=1}^d  (\lambda^A_i)^2\langle s, U^A_i \rangle^2}
\label{eqn:simplified}
\end{align}

\section{Optimization Bounds}
\label{sec:optimization}
Motivated by the empirical success of sketch optimization, we investigate the complexity of optimizing the loss function. We focus on the simple case where $m=1$ and therefore $S$ is just a (dense) vector. Our main observation is that a vector $s$ picked {\em uniformly at random} from the $d$-dimensional unit sphere achieves an approximately optimal solution, with the approximation factor depending on the maximum {\em stable rank} of matrices $\A_1, \cdots, \A_{N}$. This algorithm is not particularly useful for our purpose,  as our goal is to {\em improve} over the random choice of the sketching matrix $S$. Nevertheless, it demonstrates that an algorithm with a non-trivial approximation factor exists. 

\begin{definition}[stable rank ($\sr$)]
For a matrix $A$, the stable rank of $\A$ is defined as the squared ratio of Frobenius and operator norm of $\A$. I.e.,
\begin{align*}
\sr(\A) = {||\A||^2_F \over ||\A||^2_2} = {\sum_{i} (\lambda^{\A}_i)^2 \over \max_{i} (\lambda^{\A}_i)^2}.
\end{align*} 
\end{definition}
Note that since we assume for all matrices $\A\sim \D$,  $1=\lambda^{\A}_1\geq \cdots \geq \lambda_d^{\A}>0$, for all these matrices  $\sr(\A) = \sum_{i} (\lambda^{\A}_i)^2$.

First, we consider the simplified objective function as in~\eqref{eqn:simplified}.
\begin{lemma}\label{lem:stable-approx-simple}
A random vector $s$ which is picked uniformly at random from the $d$-dimensional unit sphere, is an $O(\sr)$-approximation to the optimum value of the  simplified objective function in Equation~\eqref{eqn:simplified}, where $\sr$ is the maximum stable rank of matrices $\A_1, \cdots, \A_N$.
\end{lemma}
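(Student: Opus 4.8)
The plan is to reduce everything to a bound for a single training matrix and then sum. Write $f_A(s)$ for the quantity in~\eqref{eqn:simplified} evaluated on a matrix $A$ and a unit vector $s$, so that the ``simplified objective'' on the whole training set is $G(s) = \sum_{j=1}^N f_{A_j}(s)$. The claim I would establish is that for each fixed $A$, a uniformly random unit $s$ satisfies $\E_s[f_A(s)] \ge c/\sr(A)$ for an absolute constant $c>0$. Granting this, observe that since $\lambda_1^{A}=1$ is the top singular value the denominator of~\eqref{eqn:simplified} always dominates the numerator, so $f_A(s)\le 1$ and hence $\mathrm{OPT} := \max_s G(s) \le N$. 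By linearity of expectation, $\E_s[G(s)] \ge \sum_j c/\sr(A_j) \ge cN/\sr \ge (c/\sr)\,\mathrm{OPT}$ with $\sr=\max_j \sr(A_j)$, which is exactly an $O(\sr)$-approximation in expectation; and since $0\le G(s)\le\mathrm{OPT}$, a Paley--Zygmund argument boosts this to a high-probability guarantee at the cost of $O(\sr\log(1/\delta))$ independent trials.

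For the single-matrix bound I would first pass to a Gaussian model. The function $f_A$ is homogeneous of degree zero in $s$ and depends only on the inner products $\langle s, U_i^{A}\rangle$ (a component of $s$ orthogonal to $\colsp(A)$ cancels, so the apparent dimension mismatch is immaterial), hence its law is unchanged if we write $\langle s, U_i^{A}\rangle = z_i/\|z\|$ with $z\sim N(0,I)$: the normalization cancels and $f_A(s)\overset{d}{=} z_1^2 / \sum_i (\lambda_i^{A})^2 z_i^2$. I would then split the denominator as $z_1^2 + W$ with $W := \sum_{i\ge 2}(\lambda_i^{A})^2 z_i^2$, note that $W$ is independent of $z_1$ with $\E[W] = \sum_{i\ge 2}(\lambda_i^{A})^2 = \sr(A)-1$, and use convexity of $w\mapsto z_1^2/(z_1^2+w)$ together with Jensen's inequality to obtain $\E_s[f_A(s)] \ge \E_{z_1\sim N(0,1)}\!\big[ z_1^2/(z_1^2 + \sr(A)-1)\big]$.

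It then remains to prove the scalar estimate $\phi(\sigma):=\E_{z\sim N(0,1)}[z^2/(z^2+\sigma)] \ge c/(\sigma+1)$. For $\sigma\le 1$, monotonicity of $\phi$ in $\sigma$ gives $\phi(\sigma)\ge\phi(1) = 1-\E[1/(1+z^2)]$, a positive absolute constant, while $c/(\sigma+1)\le c$; for $\sigma>1$, I would restrict the expectation to the event $z^2\le\sigma$, use $z^2/(z^2+\sigma)\ge z^2/(2\sigma)$ there, and bound $\E[z^2\mathbf{1}_{z^2\le\sigma}] \ge \E[z^2\mathbf{1}_{|z|\le 1}]$, again a positive constant, yielding $\phi(\sigma)\ge \Omega(1/\sigma)$. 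Combining the two ranges gives $\E_s[f_A(s)]\ge c/\sr(A)$ and hence the lemma.

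The main obstacle is getting the correct dependence in this scalar bound — $1/(\sigma+1)$ rather than $1/d$. The trivial estimate $f_A(s)\ge \langle s,U_1^{A}\rangle^2$ only yields $\E_s[f_A(s)]\ge 1/d$, i.e. an $O(d)$-approximation, which is useless here. The conditioning-and-Jensen reduction of the second paragraph — replacing the fluctuating denominator $z_1^2+W$ by $z_1^2 + \E[W]$ and thereby exposing $\E[W]=\sr(A)-1$ — is precisely the step that converts the ambient-dimension dependence into stable-rank dependence, and carrying out that reduction together with the ensuing case analysis of the Gaussian integral is where essentially all of the work lies.
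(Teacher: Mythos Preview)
Your argument is correct and follows the same overall skeleton as the paper's proof: reduce to a single-matrix lower bound $\E_s[f_A(s)]\ge c/\sr(A)$, pass from the uniform sphere to i.i.d.\ Gaussians via $0$-homogeneity, and then combine over the training set using $f_A\le 1$ and linearity of expectation. Where you diverge is in the core estimate for $\E\big[z_1^2/\sum_i(\lambda_i^A)^2 z_i^2\big]$. The paper conditions on two ``good'' events---$\Psi_1=\{|\alpha_1|\ge 1/2\}$ for the numerator and $\Psi_2=\{\sum_{i\ge 2}(\lambda_i^A)^2\alpha_i^2\le 2\,\sr(A)\}$ for the denominator (the latter via Markov)---shows each has $\Theta(1)$ probability, and on $\Psi_1\cap\Psi_2$ bounds the ratio below by $1/(8\,\sr(A)+1)$. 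You instead observe that, conditionally on $z_1$, the map $w\mapsto z_1^2/(z_1^2+w)$ is convex on $[0,\infty)$, apply Jensen in $W=\sum_{i\ge 2}(\lambda_i^A)^2 z_i^2$ to replace $W$ by $\E[W]=\sr(A)-1$, and finish with an elementary two-range estimate on the resulting one-dimensional Gaussian integral. Your route is a little slicker---it avoids introducing auxiliary events and yields the clean intermediate inequality $\E_s[f_A(s)]\ge \E_{z\sim N(0,1)}[z^2/(z^2+\sr(A)-1)]$---while the paper's conditioning argument is more hands-on but makes the constant completely explicit. Your added remark about Paley--Zygmund to get a high-probability statement is a nice touch that the paper leaves implicit.
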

\begin{proof}
We will show that 
\[ E \left [{\langle s, U_1^A\rangle^2	\over \sum_{i=1}^d  (\lambda^A_i)^2\langle s, U^A_i \rangle^2}\right ] = \Omega(1/\sr(\A))\]
 for all $\A\sim \D$ where $s$ is a vector picked uniformly at random from $\Sp$. Since for all $\A\sim \D$ we have ${\langle s, U_1^A\rangle^2 \over \sum_{i=1}^d  (\lambda^A_i)^2\langle s, U^A_i \rangle^2} \leq 1$, by the linearity of expectation we have that the vector $s$ achieves an $O(\sr)$-approximation to the maximum value of the objective function,
\begin{align*}
\sum_{j=1}^N{\langle s, U_1^{\A_j}\rangle^2 \over \sum_{i=1}^d  (\lambda^{\A_j}_i)^2\langle s, U^{\A_j}_i \rangle^2}.
\end{align*} 
First, recall that   to sample $s$ uniformly at random from $\Sp$ we can generate $s$ as $\sum_{i=1}^d  {\alpha_i U^{\A}_i \over\sqrt{\sum_{i=1}^d \alpha_i^2}}$ where for all $i\leq d$, $\alpha_i\sim \mathcal{N}(0, 1)$. 
This helps us evaluate $\E\left [{\langle s, U_1^A\rangle^2	\over \sum_{i=1}^d  (\lambda^A_i)^2\langle s, U^A_i \rangle^2}\right ]$ for an arbitrary matrix $\A\sim \D$: 

\begin{align*}
\label{a:one}
E &= \E\left [{\langle s, U^{\A}_1\rangle^2 \over \sum_{i=1}^d (\lambda_i^{\A})^2 \langle s, U^{\A}_i \rangle^2}\right ] 
= \E\left [{(\alpha_1)^2 \over \sum_{i=1}^d (\lambda_i^{\A} \cdot \alpha_i)^2}\right ] 
\geq \E\left [{(\alpha_1)^2 \over \sum_{i=1}^d (\lambda_i^{\A} \cdot \alpha_i)^2} | \Psi_1 \cap \Psi_2\right ] \cdot\Pr(\Psi_1 \cap \Psi_2) 
\end{align*}
where the events $\Psi_1, \Psi_2$ are defined as:
\begin{align*}
\Psi_1 \triangleq \one\left [ |\alpha_1| \geq {1\over 2}\right ], \text{and } \Psi_2 \triangleq \one\left [\sum_{i=2}^d (\lambda_i^{\A})^2 (\alpha_i)^2 \leq 2\cdot\sr(\A)\right ]\\
\end{align*}
Since $\alpha_i$s are independent, we have
\begin{align*}
E \geq \E\left [{(\alpha_1)^2 \over (\alpha_1)^2 + 2\cdot\sr(\A)} | \Psi_1\cap \Psi_2\right ] \cdot \Pr(\Psi_1) \cdot \Pr(\Psi_2) 
\geq {1\over 8\cdot \sr(A) + 1} \cdot \Pr(\Psi_1) \cdot \Pr(\Psi_2) 
\end{align*}
where we used that  ${(\alpha_1)^2 \over (\alpha_1)^2 + 2\cdot\sr(\A)}$  is increasing for  $(\alpha_1)^2 \geq {1\over 4}$.
It remains to prove that $\Pr(\Psi_1), \Pr(\Psi_2) = \Theta(1)$. 
We observe that, since $\alpha_i\sim \mathcal{N}(0,1)$, we have
\begin{align*}
\Pr(\Psi_1) &= \Pr\left (|\alpha_1| \geq {1\over 2}\right ) = \Theta(1) 
\end{align*}
Similarly, by Markov inequality, we have
\begin{align*}
\Pr(\Psi_2) 
= \Pr\left (\sum_{i=1}^d (\lambda_i^{\A})^2 (\alpha_i)^2 \leq 2 \sr(A)\right )
\geq 1 - \Pr\left (\sum_{i=1}^d (\lambda_i^{\A})^2 (\alpha_i)^2 > 2 \sr(A)\right ) \geq {1\over 2}
\end{align*}
\end{proof}

Next, we prove  that a random vector $s\in \Sp$ achieves an $O(\sr(A))$-approximation to the optimum of  the main objective function as in Equation~\eqref{eqn:full}.
\begin{lemma}\label{stable-approx-full}
A random vector $s$ which is picked uniformly at random from the $d$-dimensional unit sphere, is an $O(\sr)$-approximation to the optimum value of the  objective function in Equation~\eqref{eqn:full}, where $\sr$ is the maximum stable rank of matrices $\A_1, \cdots, \A_N$.
\end{lemma}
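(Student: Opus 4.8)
The plan is to obtain Lemma~\ref{stable-approx-full} as an almost immediate corollary of Lemma~\ref{lem:stable-approx-simple}, by directly comparing the full objective~\eqref{eqn:full} with the simplified objective~\eqref{eqn:simplified} rather than redoing the probabilistic calculation. First I would observe that, for every fixed matrix $\A\sim\D$ and every unit vector $s$, the value of~\eqref{eqn:full} dominates the value of~\eqref{eqn:simplified} pointwise. The two expressions share \emph{exactly the same denominator} $\sum_{i=1}^d(\lambda_i^{\A})^2\langle s,U_i^{\A}\rangle^2$, while the numerator of~\eqref{eqn:full}, namely $\sum_{i=1}^d(\lambda_i^{\A})^4\langle s,U_i^{\A}\rangle^2$, is at least its $i=1$ term $(\lambda_1^{\A})^4\langle s,U_1^{\A}\rangle^2=\langle s,U_1^{\A}\rangle^2$, using the normalization $\lambda_1^{\A}=1$; and $\langle s,U_1^{\A}\rangle^2$ is precisely the numerator of~\eqref{eqn:simplified}. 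Hence $\eqref{eqn:full}\ \ge\ \eqref{eqn:simplified}$ for all $s$ and all $\A$.

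Next I would record an a priori upper bound on~\eqref{eqn:full}: since $0<\lambda_i^{\A}\le 1$ for every $i$, we have $(\lambda_i^{\A})^4\le(\lambda_i^{\A})^2$ termwise, so the numerator of~\eqref{eqn:full} never exceeds its denominator, giving $\eqref{eqn:full}\le 1$ for all $s$ and all $\A$. Consequently, summing over the training set $\A_1,\dots,\A_N$, the optimum value $\opt$ of the aggregate objective (the sum of the values of~\eqref{eqn:full} over the $\A_j$'s, maximized over unit vectors $s$) satisfies $\opt\le N$.

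Finally I would combine the two observations with Lemma~\ref{lem:stable-approx-simple}. For $s$ drawn uniformly at random from $\Sp$ and each index $j$, the pointwise domination and linearity of expectation give $\E\big[\text{value of \eqref{eqn:full} at }\A_j\big]\ \ge\ \E\big[\text{value of \eqref{eqn:simplified} at }\A_j\big]\ =\ \Omega\!\big(1/\sr(\A_j)\big)\ \ge\ \Omega(1/\sr)$, where $\sr$ is the maximum stable rank among $\A_1,\dots,\A_N$. Summing over $j$ and using $\opt\le N$ then yields $\E\big[\sum_{j=1}^N(\text{value of \eqref{eqn:full} at }\A_j)\big]=\Omega(N/\sr)\ge\Omega(1/\sr)\cdot\opt$, so a random $s$ is an $O(\sr)$-approximation to the optimum of~\eqref{eqn:full}.

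There is essentially no hard step here: the content is carried entirely by Lemma~\ref{lem:stable-approx-simple}. The only thing to be careful about is getting the direction of the inequality between~\eqref{eqn:full} and~\eqref{eqn:simplified} correct and verifying that the denominators genuinely coincide (so that dropping the higher-index terms in the numerator and using $\lambda_1^{\A}=1$ is a legitimate lower bound), together with the elementary observation $(\lambda_i^{\A})^4\le(\lambda_i^{\A})^2$ that bounds the ratio by $1$ and hence controls $\opt$.
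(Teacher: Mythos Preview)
Your proposal is correct and follows essentially the same approach as the paper: both arguments observe that~\eqref{eqn:full} pointwise dominates~\eqref{eqn:simplified} (same denominator, larger numerator since the $i=1$ term already equals the numerator of~\eqref{eqn:simplified} under $\lambda_1^{\A}=1$), and then invoke the $\Omega(1/\sr)$ lower bound from Lemma~\ref{lem:stable-approx-simple} together with the trivial upper bound of $1$ on each summand. The paper's proof is simply a two-line version of what you wrote.
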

\begin{proof}
We assume that the vector $s$ is generated via the same process as in the proof of Lemma~\ref{lem:stable-approx-simple}. It follows that
\begin{align*}
\E\left [{{\sum_{i=1}^d( \lambda^{\A}_i)^4 \langle s, U^{\A}_i\rangle^2 \over \sum_{i=1}^d (\lambda^{\A}_i)^2\langle s, U^{\A}_i \rangle^2}}\right ] 
\geq \E\left [{ (\alpha_1)^2 \over \sum_{i=1}^d (\lambda^{\A}_i)^2 \cdot (\alpha_i)^2}\right ] = \Omega(1/\sr(A))
\end{align*}
\end{proof}

\section{Generalization Bounds}
\label{sec:generalization}
Define the loss function as 
\[\LL(s)\triangleq -\E_{\A\sim \D} \left [{\sum_{i=1}^d \left (\lambda^{\A}_i\right )^4 \langle s, U^{\A}_i\rangle^2
	\over \sum_{i=1}^d  \left (\lambda^{\A}_i\right )^2\langle s, U^{\A}_i \rangle^2}\right ]\]

We want to find a vector $s\in \Sp$ to minimize $\LL(s)$, where $\Sp$ is the $d$-dimensional unit sphere. 
Since $\D$ is unknown,  we are optimizing the following empirical loss: 

\[\hat \LL_{\trS}(s)\triangleq -\frac1N\sum_{j=1}^N \left [{\sum_{i=1}^d \left (\lambda^{\A_j}_i\right )^4 \langle s, U^{\A_j}_i\rangle^2
	\over \sum_{i=1}^d  \left (\lambda^{\A_j}_i\right )^2\langle s, U^{\A_j}_i \rangle^2}\right ]\]

\paragraph{The importance of robust solutions} We start by observing that if $s$ minimizes the training loss $\hat \LL$, it is not necessarily true that $s$ is the optimal solution for the population loss $\LL$. For example, it could be the case that $\{\A_j\}_{j=1,\cdots, N}$ are diagonal matrices with only $1$ non-zeros on the top row, while $s=(\epsilon, \sqrt{1-\epsilon^2}, 0, \cdots,0)$ for $\eps$ close to $0$. 
In this case, we know that $\hat \LL_{\trS}(s)= - 1$, which is at its minimum value. 

However, such a solution is not robust. In the population distribution,
if there exists a matrix $\A$ such that $\A=\mathrm{diag}(\sqrt{1-100\eps^2}, 10\eps, 0,0,\cdots, 0)$, 
insert $s$ into (\ref{eqn:full}), 
\begin{align*}
{\sum_{i=1}^d \left (\lambda^{\A}_i\right )^4 \langle s, U^{\A}_i\rangle^2
	\over \sum_{i=1}^d  \left (\lambda^{\A}_i\right )^2\langle s, U^{\A}_i \rangle^2}=
{(1-100\eps^2)^2\eps^2+ 10^4\eps^4 (1-\eps^2)
	\over 
(1-100\eps^2)\eps^2+ 100\eps^2 (1-\eps^2)
}
< 
{\eps^2+ 10^4\eps^4
	\over 
101\eps^2-100\eps^4
}={1+ 10^4\eps^2
\over 
101-100\eps^2
}
\end{align*}

The upper bound is very close to $0$ if $\eps$ is small enough. 
This is because when the denominator is extremely small, the whole expression is susceptible to minor perturbations on $\A$. 
This is a typical example showing the importance of finding a {\em{robust}} solution. 
Because of this issue,  we will show a generalization guarantee for a {\em robust} solution $s$. 

\paragraph{Definition of robust solution} First, define event $\evt \triangleq \one\left [\sum_{i=1}^d  \left (\lambda^{\A}_i\right )^2\langle s, U^{\A}_i \rangle^2<\delta\right ]$, which is the denominator in the loss function. Ideally, we want this event to happen with a small probability, which indicates that for most matrices, the denominator is large, therefore $s$ is robust in general. We have the following definition of robustness.

\begin{definition}[($\rho,\delta$)-robustness]
	$s$ is ($\rho,\delta$)-robust with respect to $\D$ if $\E_{\A\sim\D}[\evt]\leq \rho$.
	$s$ is ($\rho,\delta$)-robust with respect to $\trS$ if $\E_{\A\sim\trS}[\evt]\leq \rho$.
\end{definition}

For a given $\D$, we can define robust solution set that includes all robust vectors. 
\begin{definition}[($\rho,\delta$)-robust set]
	$\M$ is defined to be the set of all vectors $s\in \Sp$ s.t. $s$ is $(\rho,\delta$)-robust with respect to $\D$. 
\end{definition}	

\paragraph{Estimating $\M$} The drawback of the above definition is that $\M$ is defined by the unknown distribution $\D$, so  for fixed $\delta$ and $\rho$, we cannot tell whether $s$ is in $\M$ or not. However, we can estimate the robustness of $s$ using the training set.
Specifically, we have the following lemma: 

\begin{lemma}[Estimating robustness]
	\label{lem:singlerob}
	For a training set $\trS$ of size $N$ sampled uniformly at random from $\D$, and a given $s\in \R^d$, a constant $1>\eta>0$, if
	$s$ is $(\rho, \delta)$-robust with respect to $\trS$, then with probability at least $1-e^{-\frac{\eta^2 pN}2}$, 
	$s$ is $\left (\frac{\rho}{1-\eta},\delta\right )$-robust with respect to $\D$. 
\end{lemma}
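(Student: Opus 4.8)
The plan is to derive the statement from a single multiplicative Chernoff bound applied to i.i.d.\ Bernoulli samples. Fix the vector $s\in\R^d$ and write $p\triangleq\E_{\A\sim\D}[\evt]$ for its true (population-level) failure probability; this is the quantity appearing in the exponent of the claimed bound. For the training set $\trS=\{\A_1,\dots,\A_N\}$ drawn i.i.d.\ from $\D$, let $X_j$ denote the indicator $\evt$ evaluated at $\A_j$. Then $X_1,\dots,X_N$ are independent $\{0,1\}$-valued random variables, each equal to $1$ with probability $p$, and the empirical robustness quantity is exactly $\E_{\A\sim\trS}[\evt]=\tfrac1N\sum_{j=1}^N X_j$, whose expectation is $p$.

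The key step is a contrapositive reformulation of the implication we must establish. If $p\le\rho/(1-\eta)$, then $s$ is $(\rho/(1-\eta),\delta)$-robust with respect to $\D$ by definition, so the claimed implication holds trivially with probability $1$; hence we may assume $p>\rho/(1-\eta)$, i.e.\ $(1-\eta)p>\rho$. In this regime, the hypothesis that $s$ is $(\rho,\delta)$-robust with respect to $\trS$ — namely $\tfrac1N\sum_j X_j\le\rho$ — forces $\tfrac1N\sum_j X_j<(1-\eta)p$, a downward multiplicative deviation of the empirical mean from its expectation by the factor $\eta$.

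Next I would invoke the multiplicative Chernoff lower-tail inequality: for independent $\{0,1\}$ variables $X_1,\dots,X_N$ with $\mu=\E[\sum_j X_j]=pN$ and any $0<\eta<1$,
\[
\Pr\!\left[\textstyle\sum_{j=1}^N X_j<(1-\eta)\mu\right]\le \exp\!\left(-\tfrac{\eta^2\mu}{2}\right)=\exp\!\left(-\tfrac{\eta^2 pN}{2}\right).
\]
Combining this with the previous paragraph, the event ``$s$ is $(\rho,\delta)$-robust with respect to $\trS$ but \emph{not} $(\rho/(1-\eta),\delta)$-robust with respect to $\D$'' is contained in the event $\{\sum_j X_j<(1-\eta)pN\}$, and therefore has probability at most $e^{-\eta^2 pN/2}$. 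Taking complements gives the lemma.

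There is no genuine obstacle here — the content is one Chernoff bound — so the proof is mostly about stating it correctly. The points requiring care are: (i) using the \emph{lower} tail, since the danger is that the empirical count \emph{under}-reports the true failure probability; (ii) recording that $p$ in the exponent denotes $\E_{\A\sim\D}[\evt]$, and noting one could further upper-bound it by replacing $p$ with $\rho/(1-\eta)$ if a distribution-free exponent is wanted; and (iii) observing that this is a per-$s$ statement, so no union bound is needed here, though promoting it to a guarantee that holds simultaneously over all candidate solutions (as a full generalization theorem requires) would need an additional net/covering argument, handled separately.
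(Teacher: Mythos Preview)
Your proof is correct and follows essentially the same route as the paper's: a case split on whether the population failure probability exceeds $\rho/(1-\eta)$, with the nontrivial case handled by a single multiplicative Chernoff lower-tail bound on the i.i.d.\ indicators $\evt$. The paper's own proof in fact concludes with $\rho$ in the exponent (the symbol $p$ in the statement appears to be a typo for $\rho$, as borne out by the subsequent Lemma~\ref{lem:pickings}), but your reading of $p$ as the population rate yields a correct and slightly sharper bound, and you correctly note how to relax it to a distribution-free exponent.
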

\begin{proof}
	Suppose that 
	$\Pr_{\A\sim \D}[\zeta_{\A, \delta,s}]=\rho_1$, which means $\E\left [\sum_{\A_i \in \trS} \zeta_{\A_i, \delta,s}\right ]= \rho_1 N$.
	Since events $\zeta_{A_i, \delta,s}$'s are $0$-$1$ random variables, by Chernoff bound,
	\[
	\Pr\left (\sum_{\A_i \in \trS} \zeta_{\A_i, \delta,s} \leq 
	(1-\eta)
	\rho_1N \right )
	\leq e^{-\frac{\eta^2 \rho_1N}2}
	\]
	
	If $\rho_1<\rho<\rho/(1-\eta)$, our claim is immediately true. Otherwise, we know $e^{-\frac{\eta^2 \rho_1N}2}\leq 
	e^{-\frac{\eta^2 \rho N}2}$. Hence, with probability at least $1-e^{-\frac{\eta^2 \rho N}2}$, 
	$N\rho =\sum_{\A_i \sim \trS} \zeta_{\A_i, \delta,s} > 
	(1-\eta) \rho_1N$. This implies that with probability at least  $
	1-e^{-\frac{\eta^2 \rho N}2}$, $\rho_1\leq \frac{\rho}{1-\eta}$.
\end{proof}

Lemma \ref{lem:singlerob} implies that for a fixed solution $s$, if it is ($\rho, \delta$)-robust in $\trS$, it is also $(O(\rho),\delta)$-robust in $\D$ with high probability. However, 
Lemma \ref{lem:singlerob} only works for a single solution $s$, but 
there are infinitely many potential $s$ on the $d$-dimensional unit sphere.

To remedy this problem, 
 we discretize the unit sphere to bound the number of potential solutions. 
Classical results tell us that discretizing the unit sphere into a grid of edge length $\frac{\eps}{\sqrt{d}}$ gives $\frac{C}{\eps^d}$ points on the grid for some constant $C$ (e.g., see Section 3.3 in~\citep{har2012approximate} for more details).
We will only consider these points as potential solutions, denoted as $\hSp$. 
Thus, we can find a ``robust'' solution $s\in \hSp$ with decent probability, using Lemma \ref{lem:singlerob} and union bound. 

\begin{lemma}[Picking robust $s$]
	\label{lem:pickings}
For a fixed constant $\rho>0, 1>\eta>0$, with probability at least $1- \frac{C}{\eps^d}
	e^{-\frac{\eta^2 \rho N}2}$, any $(\rho, \delta)$-robust $s\in \hSp$ with respect to $\trS$ is $\left (\frac{\rho}{1-\eta},\delta\right )$-robust with respect to $\D$. 
\end{lemma}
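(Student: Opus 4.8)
The plan is to combine the single-solution guarantee of Lemma~\ref{lem:singlerob} with a union bound over the finite net $\hSp$. First I would fix an arbitrary point $s \in \hSp$ and isolate the ``bad event'' for that $s$: the training set $\trS$ makes $s$ look robust, i.e.\ $s$ is $(\rho,\delta)$-robust with respect to $\trS$, yet $s$ fails to be $\left(\frac{\rho}{1-\eta},\delta\right)$-robust with respect to $\D$. The proof of Lemma~\ref{lem:singlerob} (its Chernoff-bound argument) shows this bad event has probability at most $e^{-\eta^2\rho N/2}$, and --- this is the point that makes the union bound work --- the bound is uniform in $s$, since it never refers to the particular point. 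Concretely, writing $\rho_1 := \Pr_{\A\sim\D}[\zeta_{\A,\delta,s}]$: if $\rho_1 \le \frac{\rho}{1-\eta}$ the bad event is empty; otherwise $\rho < (1-\eta)\rho_1$, so the training-robustness of $s$ forces $\sum_{\A_i\in\trS}\zeta_{\A_i,\delta,s} \le \rho N < (1-\eta)\rho_1 N$, which by the Chernoff bound happens with probability at most $e^{-\eta^2\rho_1 N/2}\le e^{-\eta^2\rho N/2}$.

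Next I would take a union bound over $\hSp$. By the classical covering estimate already quoted in the text, the $\frac{\eps}{\sqrt{d}}$-grid on the unit sphere has $|\hSp|\le C/\eps^d$ points, so the probability that the bad event occurs for at least one $s\in\hSp$ is at most $\frac{C}{\eps^d}\,e^{-\eta^2\rho N/2}$. On the complementary event, which has probability at least $1-\frac{C}{\eps^d}\,e^{-\eta^2\rho N/2}$, no point of $\hSp$ is bad; this is precisely the assertion that every $(\rho,\delta)$-robust $s\in\hSp$ with respect to $\trS$ is $\left(\frac{\rho}{1-\eta},\delta\right)$-robust with respect to $\D$, which is what Lemma~\ref{lem:pickings} claims.

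The main thing to get right is the order of quantifiers. The conclusion must hold for all $s\in\hSp$ simultaneously, so I must fix the net first, extract from Lemma~\ref{lem:singlerob} a per-point failure probability that does not depend on $s$, and only then apply the union bound; I should resist the temptation to ``condition on $s$ being robust with respect to $\trS$,'' since that is itself a random event. The only ingredient external to the preceding lemmas is the cardinality bound $|\hSp|\le C/\eps^d$ for the grid, which I would simply cite; everything else is a direct reuse of Lemma~\ref{lem:singlerob}. No serious obstacle is expected --- the statement is essentially ``Lemma~\ref{lem:singlerob} plus a union bound over a net.''
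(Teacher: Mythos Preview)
Your proposal is correct and matches the paper's approach exactly: the paper does not give a written-out proof of this lemma, stating only that it follows ``using Lemma~\ref{lem:singlerob} and union bound,'' which is precisely the argument you spell out. Your care with the order of quantifiers and with extracting a uniform-in-$s$ failure bound from the Chernoff step is the right way to make the union bound rigorous.
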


Since we are working on the discretized solution, we need a new definition of robust set. 

\begin{definition}[Discretized ($\rho,\delta$)-robust set]
	$\hM$ is defined to be the set of all vector $s\in \hSp$ s.t. $s$ is $(\rho,\delta$)-robust with respect to $\D$. 
\end{definition}

Using similar arguments as Lemma \ref{lem:pickings}, we know all solutions from $\hM$ are robust with respect to $\trS$ as well. 

\begin{lemma}
	\label{lem:fromDtotrs}
	With probability at least $1-\frac{C}{\eps^d} e^{-\frac{\eta^2 \rho N}3}$, for a constant $\eta>0$, 
	all solutions in $\hM$, are $((1+\eta)\rho, \delta)$-robust with respect to $\trS$. 
\end{lemma}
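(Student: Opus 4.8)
The plan is to run essentially the same discretization-plus-union-bound argument as in Lemma~\ref{lem:pickings}, but using the \emph{upper}-tail Chernoff inequality in place of the lower-tail one. First I would fix a single $s\in\hM$. By definition of $\hM$, this $s$ is $(\rho,\delta)$-robust with respect to $\D$, so $\rho_1\triangleq\Pr_{\A\sim\D}[\zeta_{\A,\delta,s}]\le\rho$, and the random variables $\{\zeta_{\A_i,\delta,s}\}_{\A_i\in\trS}$ are i.i.d.\ $\{0,1\}$-valued with $\E\big[\sum_{\A_i\in\trS}\zeta_{\A_i,\delta,s}\big]=\rho_1 N$. I would then apply the multiplicative Chernoff bound $\Pr\big(\sum_i X_i\ge(1+t)\mu\big)\le e^{-\min(t,t^2)\mu/3}$ with $\mu=\rho_1 N$ and $t$ defined by $(1+t)\rho_1=(1+\eta)\rho$.

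The crux is a short estimate of the resulting exponent. Since $\rho_1\le\rho$ we get $1+t=(1+\eta)\rho/\rho_1\ge 1+\eta$, hence $t\ge\eta$; and $t\rho_1=(1+\eta)\rho-\rho_1\ge\eta\rho$. Combining these two facts gives $\min(t,t^2)\,\rho_1\ge\eta^2\rho$ in both regimes (if $t\ge1$, use $t\rho_1\ge\eta\rho\ge\eta^2\rho$; if $t<1$, use $t^2\rho_1=t(t\rho_1)\ge\eta\cdot\eta\rho$). Therefore, for this fixed $s$,
\[
\Pr_{\trS}\!\left(\frac1N\sum_{\A_i\in\trS}\zeta_{\A_i,\delta,s}>(1+\eta)\rho\right)\le e^{-\eta^2\rho N/3},
\]
that is, $s$ fails to be $((1+\eta)\rho,\delta)$-robust with respect to $\trS$ with probability at most $e^{-\eta^2\rho N/3}$.

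Finally I would union-bound over all of $\hM$. Since $\hM\subseteq\hSp$ and the grid $\hSp$ has $|\hSp|\le C/\eps^d$ points (the covering bound quoted just before Lemma~\ref{lem:pickings}), the probability that \emph{some} $s\in\hM$ is not $((1+\eta)\rho,\delta)$-robust with respect to $\trS$ is at most $\frac{C}{\eps^d}\,e^{-\eta^2\rho N/3}$, which is exactly the claimed bound. The only genuinely delicate point — the ``main obstacle,'' such as it is — is controlling the Chernoff exponent when $\rho_1$ is much smaller than $\rho$: the naive substitution would leave $e^{-\eta^2\rho_1 N/3}$, which is too weak for the union bound, so one must exploit that the additive gap $(1+\eta)\rho-\rho_1$ is at least $\eta\rho$ in order to express the exponent in terms of $\rho$ rather than $\rho_1$. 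Everything else is bookkeeping identical in spirit to Lemmas~\ref{lem:singlerob} and~\ref{lem:pickings}.
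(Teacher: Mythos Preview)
Your proposal is correct and follows the same route as the paper: apply the upper-tail Chernoff bound for each fixed $s\in\hM$ and then union-bound over $\hM\subseteq\hSp$ using $|\hSp|\le C/\eps^d$. The paper's own proof is actually less careful than yours—it simply writes $\E\big[\sum_{\A_i\in\trS}\zeta_{\A_i,\delta,s}\big]=\rho N$ and applies Chernoff directly, glossing over the case $\rho_1<\rho$ that you handle explicitly with the $\min(t,t^2)\rho_1\ge\eta^2\rho$ estimate.
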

\begin{proof}
Consider a fixed solution $s\in \hM$. Note that $\E\left [\sum_{\A_i \in \trS} \zeta_{\A_i, \delta, s}\right ]= \rho N$ and
$\zeta_{A_i, \delta, s}$ are $0$-$1$ random variables. Therefore by Chernoff bound, 
	\[
	\Pr\left (\sum_{\A_i \in \trS} \zeta_{\A_i, \delta, s} \geq 
	(1+\eta)
	\rho N \right )
	\leq e^{-\frac{\eta^2 \rho N}3}.
	\]
	
	Hence, with probability at least $1-e^{-\frac{\eta^2 \rho N}3}$, 
	$s$ is $((1+\eta)\rho, \delta)$-robust with respect to $\trS$.
	
	By union bound on all points in $\hM \subseteq \hSp$, the proof is complete.
\end{proof}

\subsection{Generalization bound}
Finally, we show the generalization bounds for robust solutions,. To this can we use Rademacher complexity to prove generalization bound. 
Define Rademacher complexity $R(\hM\circ \trS)$ as 
\begin{align*}
R(\hM\circ \trS)\triangleq \frac1N
\underset{\sigma\sim \{\pm1\}^N}\E 
\sup_{s\in \hM} \sum_{j=1}^N 
\left [ {\sigma_j\sum_{i=1}^d \left (\lambda^{\A_j}_i\right )^4 \langle s, U^{\A_j}_i\rangle^2
	\over \sum_{i=1}^d  \left (\lambda^{\A_j}_i\right )^2\langle s, U^{\A_j}_i \rangle^2}\right ].
\end{align*}

$R(\hM\circ \trS)$ is handy, because we have the following theorem (notice that the loss function takes value in $[-1,0]$): 
\begin{theorem}[Theorem 26.5 in \citep{understandingmachinelearningfromtheorytoalgorithms}]\label{thm:gapwithRade}
Given constant $\delta>0$, 
	with probability of at least $1-\delta$, for all $s\in\hM$, 
	\[
	\LL(s)-\hat{\LL}_{\trS}(s) \leq 2 R(\hM\circ \trS)+ 4 \sqrt{\frac{2\log(4/\delta)}{N}}
	\]
\end{theorem}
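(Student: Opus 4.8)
This statement is the classical Rademacher-complexity uniform-convergence bound, quoted here from \citep{understandingmachinelearningfromtheorytoalgorithms}; the plan is simply to reproduce its standard proof, specialized to the loss class $\{\ell(s,\cdot):s\in\hM\}$ with $\ell(s,\A)=-\big(\sum_i(\lambda^{\A}_i)^4\langle s,U^{\A}_i\rangle^2\big)\big/\big(\sum_i(\lambda^{\A}_i)^2\langle s,U^{\A}_i\rangle^2\big)$. The only place our particular setting enters is a boundedness check: since $\lambda^{\A}_i\le 1$ implies $(\lambda^{\A}_i)^4\le(\lambda^{\A}_i)^2$ term by term, the ratio lies in $[0,1]$, hence $\ell(s,\A)\in[-1,0]$ for all $s$ and all $\A\sim\D$ (this is the $c=1$ boundedness assumed by the cited theorem). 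Also $\hM\subseteq\hSp$ is a fixed, data-independent set, so the suprema below are well defined (measurability is not an issue since $\hSp$, and hence $\hM$, is finite).

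The proof proceeds in three moves. \textbf{(i)} Let $\Phi(\trS)\triangleq\sup_{s\in\hM}(\LL(s)-\hat\LL_{\trS}(s))$; swapping one training matrix changes $\hat\LL_{\trS}(s)$ by $O(1/N)$ uniformly in $s$ (the loss lies in a bounded interval), so $\Phi$ has bounded differences $O(1/N)$ and McDiarmid's inequality gives $\Phi(\trS)\le\E_{\trS}[\Phi(\trS)]+O(\sqrt{\log(1/\delta)/N})$ with probability $\ge 1-\delta/2$. \textbf{(ii)} Symmetrization: bound $\E_{\trS}[\Phi(\trS)]\le 2\E_{\trS}R(\hM\circ\trS)$ by introducing an independent ghost sample $\trS'$, writing $\LL(s)=\E_{\trS'}\hat\LL_{\trS'}(s)$, pulling the expectation out through the supremum by Jensen, inserting i.i.d.\ Rademacher signs coordinatewise (valid by exchangeability of $\A_j$ and $\A'_j$), and splitting the supremum of the difference into two suprema --- this is the one genuinely clever step. \textbf{(iii)} The map $\trS\mapsto R(\hM\circ\trS)$ also has bounded differences $O(1/N)$, so a second McDiarmid application gives $\E_{\trS}R(\hM\circ\trS)\le R(\hM\circ\trS)+O(\sqrt{\log(1/\delta)/N})$ with probability $\ge 1-\delta/2$.

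A union bound over the two failure events and collecting the $\sqrt{\log(\cdot)/N}$ terms --- absorbing the constant slack to reach the factor $4$ and the argument $\log(4/\delta)$ --- yields, for all $s\in\hM$,
\[
\LL(s)-\hat\LL_{\trS}(s)\;\le\;2R(\hM\circ\trS)+4\sqrt{\tfrac{2\log(4/\delta)}{N}},
\]
as claimed. There is no substantive obstacle here: the symmetrization in step (ii) is the only non-mechanical ingredient, and the rest is the usual McDiarmid-plus-union-bound bookkeeping. Since the statement is cited, in the paper it is in fact enough to invoke \citep[Theorem~26.5]{understandingmachinelearningfromtheorytoalgorithms} with loss bound $c=1$.
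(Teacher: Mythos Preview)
Your proposal is correct. The paper does not prove this statement at all: it is quoted verbatim as Theorem~26.5 of \citep{understandingmachinelearningfromtheorytoalgorithms} and used as a black box, after noting that the loss takes values in $[-1,0]$. Your sketch faithfully reproduces the standard McDiarmid--symmetrization--McDiarmid argument behind that theorem, and your boundedness check (that $(\lambda^{\A}_i)^4\le(\lambda^{\A}_i)^2$ forces the ratio into $[0,1]$) is exactly the observation needed to invoke the cited result with $c=1$; you also correctly note that $\hM$ is data-independent, which the cited theorem requires.
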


That means, it suffices to bound $R(\hM\circ \trS)$ to get the generalization bound. We have the following Lemma. 

\begin{lemma}[Bound on $R(\hM\circ \trS)$]
	\label{lem:rademacher}
	For a constant $\eta>0$, 
	with probability at least $1-\frac{C}{\eps^d} 	e^{-\frac{\eta^2 p N}3}$, 
	$R(\hM\circ \trS)\leq (1+\eta)\rho+
	\frac{1-\delta}{2\delta }+\frac{d}{ \sqrt{N}}$.
\end{lemma}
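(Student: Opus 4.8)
The plan is to bound the Rademacher complexity $R(\hM \circ \trS)$ by splitting the summands according to whether the ``bad'' event $\evt$ (small denominator) occurs. First I would invoke Lemma~\ref{lem:fromDtotrs}: with probability at least $1 - \frac{C}{\eps^d} e^{-\eta^2 \rho N/3}$, every $s \in \hM$ is $((1+\eta)\rho, \delta)$-robust with respect to $\trS$, i.e., for each such $s$ at most a $(1+\eta)\rho$ fraction of the training matrices $\A_j$ have $\sum_i (\lambda_i^{\A_j})^2 \langle s, U_i^{\A_j}\rangle^2 < \delta$. Condition on this event for the remainder of the argument.

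Next I would split the inner sum in the definition of $R(\hM\circ\trS)$ into the indices $j$ where $\evt[\A_j]$ holds and those where it does not. For the ``bad'' indices, I bound each term's absolute value by $1$ (the loss summand always lies in $[0,1]$ since numerator $\le$ denominator after the $(\lambda_i^{\A_j})^4 \le (\lambda_i^{\A_j})^2$ comparison using $\lambda_1 = 1$), and there are at most $(1+\eta)\rho N$ of them, contributing at most $(1+\eta)\rho$ to $\frac1N \sup_s \sum_j (\cdots)$. For the ``good'' indices, the denominator is at least $\delta$; here I would bound the contribution of the supremum of the signed sum over $\hM$. One route: since the numerator $\sum_i (\lambda_i^{\A_j})^4 \langle s, U_i^{\A_j}\rangle^2 \le \sum_i (\lambda_i^{\A_j})^2 \langle s, U_i^{\A_j}\rangle^2$ and both are at most $\|s\|^2 = 1$ in the numerator sense but bounded below by $\delta$ in the denominator, each good summand lies in $[0, 1/\delta \cdot (\text{something})]$; more carefully, each good term is in $[0,1]$ as well, but to extract a $\frac{1}{\sqrt N}$-type gain I would use a covering/Massart-style bound over the finite set $\hM \subseteq \hSp$, which has at most $C/\eps^d$ points. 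By Massart's lemma, the Rademacher average of a finite class of vectors each of $\ell_2$-norm at most $B\sqrt N$ is at most $B\sqrt{2\log|\hM|}/\sqrt N$; combined with the discretization $|\hSp| = C/\eps^d$ this yields a term of order $\frac{\sqrt{d \log(1/\eps)}}{\sqrt N}$, which the statement apparently simplifies/absorbs into $\frac{d}{\sqrt N}$ together with the denominator factor $\frac{1}{\delta}$ appearing as $\frac{1-\delta}{2\delta}$ (this last constant presumably comes from bounding the per-term loss on good indices by something like $\frac{1-\delta}{2\delta}$ rather than $1$, exploiting that on good indices the gap between numerator and denominator is controlled). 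I would reconcile the exact constants with the bound $(1+\eta)\rho + \frac{1-\delta}{2\delta} + \frac{d}{\sqrt N}$ at the end.

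Assembling: $R(\hM\circ\trS) \le (1+\eta)\rho$ (bad indices) $+ \frac{1-\delta}{2\delta}$ (good-index loss magnitude) $+ \frac{d}{\sqrt N}$ (Massart term over the $\eps$-net), all holding on the event from Lemma~\ref{lem:fromDtotrs}, which has probability at least $1 - \frac{C}{\eps^d} e^{-\eta^2 p N/3}$.

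The main obstacle I anticipate is making the good-index term genuinely small: a crude $[0,1]$ bound on each summand gives only $R \le 1$, so the argument must exploit both the lower bound $\delta$ on the denominator \emph{and} the finiteness of $\hM$ via a Massart-type concentration of the signed sum, and then show the resulting $\sqrt{\log|\hM|/N} = \sqrt{d\log(1/\eps)/N}$ is dominated by (or can be stated as) $\frac{d}{\sqrt N}$. Pinning down where exactly the constant $\frac{1-\delta}{2\delta}$ enters — as opposed to a simpler $\frac1\delta$ or $1$ — will require care in how the numerator-to-denominator ratio is bounded on the good indices, and this is the step where I would be most worried about an off-by-constant discrepancy with the stated bound.
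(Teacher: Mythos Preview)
Your opening move---invoking Lemma~\ref{lem:fromDtotrs} to get uniform $((1+\eta)\rho,\delta)$-robustness on $\trS$ and paying $(1+\eta)\rho$ for the bad indices---matches the paper. Where you diverge is in handling the remaining part, and this is also where your confusion about $\frac{1-\delta}{2\delta}$ comes from.

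The paper does \emph{not} use Massart's lemma or the finiteness of $\hM$ in this bound. After replacing the denominator by $h_{A,\delta,s}=\max\{\delta,\sum_i(\lambda_i^A)^2\langle s,U_i^A\rangle^2\}$ (which is where the $\rho'N$ is paid), it removes $h$ by a sign-splitting trick: for $\sigma_j=+1$ use $h\ge\delta$ to bound the summand by $(\text{numerator})/\delta$; for $\sigma_j=-1$ use $h\le 1$ to bound it by $-(\text{numerator})$. Rewriting gives $\sigma_j\cdot(\text{numerator})+\one_{\sigma_j=1}\cdot(\text{numerator})\cdot(\tfrac1\delta-1)$; taking $\E_\sigma$ of the second piece with $\text{numerator}\le 1$ yields exactly $\frac{1-\delta}{2\delta}$. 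The residual $\E_\sigma\sup_s\sum_j\sigma_j\cdot(\text{numerator}_j)$ is then handled by writing the numerator as $\sum_{i=1}^d(\lambda_i^{A_j})^4\langle s,U_i^{A_j}\rangle^2$, applying the contraction lemma to drop the square on each inner product, and finishing with Cauchy--Schwarz plus Jensen to get $\sqrt{N}$ per index $i$, hence $d/\sqrt{N}$.

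Your Massart route is a genuinely different argument. Since every summand already lies in $[0,1]$, Massart over $\hM$ gives $R(\hM\circ\trS)\le\sqrt{2\log|\hM|/N}=O(\sqrt{d\log(1/\eps)/N})$ directly, with no $\frac{1-\delta}{2\delta}$ term at all---so your guess that this constant comes from a sharper per-term bound on good indices is off the mark. Your approach would prove \emph{a} Rademacher bound (arguably tighter when $\delta$ is small), but not the one stated: the $\sqrt{d\log(1/\eps)/N}$ term is only dominated by $d/\sqrt{N}$ under the side assumption $\log(1/\eps)\lesssim d$, and the $\frac{1-\delta}{2\delta}$ simply never appears. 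To reproduce the lemma's exact form you would need to abandon Massart and follow the denominator-removal plus contraction argument above.
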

\begin{proof}
Define $\rho'=(1+\eta)\rho$. By Lemma \ref{lem:fromDtotrs}, we know that with probability 	 $1-\frac{C}{\eps^d} 	e^{-\frac{\eta^2 p N}3}$, any $s\in \hM$ is \trob with respect to $\trS$, hence $\sum_{\A\in\trS}\evt \leq \rho' N$. The analysis below is conditioned on this event. 
	
Define $h_{A, \delta, s}\triangleq\max\{\delta,  \sum_{i=1}^d  (\lambda^{\A}_i )^2\langle s, U^{\A}_i \rangle^2\}$. 
We know that with probability $1-\frac{C}{\eps^d} 	e^{-\frac{\eta^2 p N}3}$, 
\begin{align}
	N \cdot R(\hM\circ \trS) &= \E_{\sigma\sim \{\pm 1\}^N} 
	\sup_{s\in \hM} \sum_{j=1}^N 
	{\sigma_j\sum_{i=1}^d \left (\lambda^{\A_j}_i\right )^4 \langle s, U^{\A_j}_i\rangle^2
		\over \sum_{i=1}^d  \left (\lambda^{\A_j}_i\right )^2\langle s, U^{\A_j}_i \rangle^2}\nonumber\\
	&\leq  
	\rho' N+	 
	\E_{\sigma} 
	\sup_{s\in \hM} \sum_{j=1}^N 	
	{\sigma_j\sum_{i=1}^d \left (\lambda^{\A_j}_i\right )^4 \langle s, U^{\A_j}_i\rangle^2
		\over h_{A,\delta, s}}
	\label{inequ:h}
\end{align}
where (\ref{inequ:h}) holds because by definition, $h_{A,\delta, s}\geq \sum_{i=1}^d  (\lambda^{\A}_i )^2\langle s, U^{\A}_i \rangle^2$ if and only if $\evt=1$, which happens for at most $\rho' N$ matrices. Note that for any matrix $A_j$,
$ {\sigma_j\sum_{i=1}^d \left (\lambda^{\A_j}_i\right )^4 \langle s, U^{\A_j}_i\rangle^2 \over \sum_{i=1}^d  \left (\lambda^{\A_j}_i\right )^2\langle s, U^{\A_j}_i \rangle^2}\leq 1$.

Now, 
\begin{align}
	&\E_{\sigma} 
	\sup_{s\in \hM} \sum_{j=1}^N 	
	{\sigma_j\sum_{i=1}^d \left (\lambda^{\A_j}_i\right )^4 \langle s, U^{\A_j}_i\rangle^2
		\over h_{A,\delta, s}}\nonumber\\
	\leq\quad  &
	\E_{\sigma} 
	\sup_{s\in \hM} \sum_{j=1}^N 	
	\left (
	{\one_{\sigma_j=1}
		\frac{\sum_{i=1}^d \left (\lambda^{\A_j}_i\right )^4 \langle s, U^{\A_j}_i\rangle^2}\delta 
	}-
	\one_{\sigma_j= -1}\sum_{i=1}^d \left (\lambda^{\A_j}_i\right )^4 \langle s, U^{\A_j}_i\rangle^2
	\right )\label{inequ:split}\\
	= \quad &
	\E_{\sigma} 
	\sup_{s\in \hM} \sum_{j=1}^N 	
	\left (
	{\sigma_j
		{\sum_{i=1}^d \left (\lambda^{\A_j}_i\right )^4 \langle s, U^{\A_j}_i\rangle^2}
	}
	+
	{\one_{\sigma_j=1}
		{\sum_{i=1}^d \left (\lambda^{\A_j}_i\right )^4 \langle s, U^{\A_j}_i\rangle^2} \left (\frac{1}\delta-1\right )
	}\right )\nonumber\\
	\leq \quad &
	\frac{N}{2\delta}-\frac{N}2 +
	\E_{\sigma} 
	\sup_{s\in \hM} \sum_{j=1}^N 	
	{\sigma_j
		{\sum_{i=1}^d \left (\lambda^{\A_j}_i\right )^4 \langle s, U^{\A_j}_i\rangle^2}
	}\label{ineqn:delta}
\end{align}
The first inequality,~\eqref{inequ:split}, holds as ${\sum_{i=1}^d \left (\lambda^{\A_j}_i\right )^4 \langle s, U^{\A_j}_i\rangle^2 \over h_{A,\delta, s}}\in [\delta, 1]$. It remains to bound the last term~\eqref{ineqn:delta}. 
\begin{align}
	\E_{\sigma} 
	\sup_{s\in \hM} \sum_{j=1}^N 	
	{\sigma_j
		{\sum_{i=1}^d \left (\lambda^{\A_j}_i\right )^4 \langle s, U^{\A_j}_i\rangle^2}
	}
	\leq   
	\sum_{i=1}^d \E_{\sigma} 
	\sup_{s\in \hM} \sum_{j=1}^N 	
	{\sigma_j
		{  \left \langle s, \left (\lambda^{\A_j}_i\right )^2 U^{\A_j}_i\right \rangle^2}
	}\label{ineqn:dterms}
\end{align}
By contraction lemma of Rademacher complexity, we have
\begin{align*}
	\E_{\sigma} 
	\sup_{s\in \hM} \sum_{j=1}^N 	
	{\sigma_j
		{  \left \langle s, \left (\lambda^{\A_j}_i\right )^2 U^{\A_j}_i\right \rangle^2}
	} & \leq  \E_{\sigma} 
	\sup_{s\in \hM} \sum_{j=1}^N 	
	{\sigma_j
		{  \left \langle s, \left (\lambda^{\A_j}_i\right )^2 U^{\A_j}_i\right \rangle}
	}\\
	&= \E_{\sigma} 
	\sup_{s\in \hM} 
	{
		{  \left \langle s,\sum_{j=1}^N 	 \sigma_j\left (\lambda^{\A_j}_i\right )^2 U^{\A_j}_i\right \rangle}
	}\\
	&\leq  \E_{\sigma} 
	\left  \|\sum_{j=1}^N 	 \sigma_j\left (\lambda^{\A_j}_i\right )^2 U^{\A_j}_i\right \|_2
	\nonumber
\end{align*}
Where the last inequality is by Cauchy-Schwartz inequality. Now, using Jensen's inequality, we have
\begin{align}
\E_{\sigma} 
	\left  \|\sum_{j=1}^N 	 \sigma_j\left (\lambda^{\A_j}_i\right )^2 U^{\A_j}_i\right \|_2
	\leq
	\left (\E_{\sigma} 
	\left  \|\sum_{j=1}^N 	 \sigma_j\left (\lambda^{\A_j}_i\right )^2 U^{\A_j}_i\right \|_2^2\right )^{1/2}
	= 
	\left (
	\sum_{j=1}^N 	 \left (\lambda^{\A_j}_i\right )^4 \right )^{1/2}\leq \sqrt{N}\label{ineqn:sqrtN}
\end{align}
	
Combining (\ref{inequ:h}), (\ref{ineqn:delta}), (\ref{ineqn:dterms}) and (\ref{ineqn:sqrtN}), we have $R(\hM\circ \trS)\leq \rho'+\frac{1-\delta}{2\delta }+\frac{d}{ \sqrt{N}}$.	
\end{proof}

Combining with Theorem \ref{thm:gapwithRade}, we get our main theorem:

\begin{theorem}[Main Theorem]
	\label{thm:main}
Given a training set $\trS=\{A_j\}_{j=1}^N$ sampled uniformly from $\D$,  
	and fixed constants $1>\rho\geq 0, \delta>0, 1>\eta>0$, 
if there exists a $(\rho, \delta)$-robust solution $s\in \hSp$ with respect to $\trS$, then  
	with probability at least $1- 
	\frac{C}{\eps^d}e^{-\frac{\eta^2 pN}2}-\frac{C}{\eps^d} 	e^{-\frac{\eta^2 p N}{3(1-\eta)}}$, 
	for $s\in \hSp$ that is a  $(\rho,\delta)$-robust solution with respect to $\trS$, 
	\[
	\LL(s)\leq \hat{\LL}_{\trS}(s)+  \frac{2(1+\eta)\rho }{1-\eta}+
	\frac{1-\delta}{\delta }+\frac{2d}{ \sqrt{N}}
	+4 \sqrt{\frac{2\log(4/\delta)}{N}}
	\]
\end{theorem}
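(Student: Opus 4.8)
The plan is to assemble Theorem~\ref{thm:main} by chaining together the lemmas that were already established, so almost no new work is required; the theorem is a bookkeeping combination of Lemma~\ref{lem:pickings}, Lemma~\ref{lem:rademacher}, and Theorem~\ref{thm:gapwithRade}. First I would fix the constants $\rho,\delta,\eta$ and suppose, as in the hypothesis, that there is some $s\in\hSp$ that is $(\rho,\delta)$-robust with respect to $\trS$. The first step is to invoke Lemma~\ref{lem:pickings}: with probability at least $1-\frac{C}{\eps^d}e^{-\eta^2\rho N/2}$, every $(\rho,\delta)$-robust $s\in\hSp$ with respect to $\trS$ is in fact $\bigl(\frac{\rho}{1-\eta},\delta\bigr)$-robust with respect to $\D$, i.e. lies in $\hM$ (with the robustness parameter $\rho$ in the definition of $\hM$ replaced by $\rho'=\frac{\rho}{1-\eta}$). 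So from here on we may restrict attention to $s\in\hM$.

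Next I would apply Lemma~\ref{lem:rademacher}, but with its ``$\rho$'' instantiated as $\rho'=\frac{\rho}{1-\eta}$: on a further good event of probability at least $1-\frac{C}{\eps^d}e^{-\eta^2\rho' N/3}=1-\frac{C}{\eps^d}e^{-\eta^2\rho N/(3(1-\eta))}$, we get
\[
R(\hM\circ\trS)\leq (1+\eta)\rho'+\frac{1-\delta}{2\delta}+\frac{d}{\sqrt N}=\frac{(1+\eta)\rho}{1-\eta}+\frac{1-\delta}{2\delta}+\frac{d}{\sqrt N}.
\]
Then I would plug this bound on $R(\hM\circ\trS)$ into Theorem~\ref{thm:gapwithRade}, which (for any fixed confidence parameter, here absorbed into the stated probability via the $\sqrt{2\log(4/\delta)/N}$ term) gives, for all $s\in\hM$,
\[
\LL(s)-\hat\LL_{\trS}(s)\leq 2R(\hM\circ\trS)+4\sqrt{\tfrac{2\log(4/\delta)}{N}}\leq \frac{2(1+\eta)\rho}{1-\eta}+\frac{1-\delta}{\delta}+\frac{2d}{\sqrt N}+4\sqrt{\tfrac{2\log(4/\delta)}{N}}.
\]
Finally I would take a union bound over the two failure events from Lemma~\ref{lem:pickings} and Lemma~\ref{lem:rademacher} to get the claimed overall success probability $1-\frac{C}{\eps^d}e^{-\eta^2 pN/2}-\frac{C}{\eps^d}e^{-\eta^2 pN/(3(1-\eta))}$, and conclude: conditioned on this event, any $(\rho,\delta)$-robust $s\in\hSp$ with respect to $\trS$ satisfies both that it lies in $\hM$ and the displayed generalization inequality, which is exactly the statement of the theorem.

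The only subtlety — and the step I would be most careful about — is the consistent reparametrization: Lemmas~\ref{lem:pickings}, \ref{lem:fromDtotrs}, and~\ref{lem:rademacher} are each stated for a generic robustness level, and in the proof of the theorem the relevant level is $\rho'=\frac{\rho}{1-\eta}$ rather than $\rho$ itself, which is where the factor $\frac{2(1+\eta)\rho}{1-\eta}$ and the exponent $\frac{\eta^2 pN}{3(1-\eta)}$ come from. I would also note in passing that the term $\sqrt{2\log(4/\delta)/N}$ in Theorem~\ref{thm:gapwithRade} uses the symbol $\delta$ as a failure probability, which collides with the robustness threshold $\delta$; since the paper reuses the symbol, I would simply follow its convention rather than introduce new notation. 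Everything else is routine: no new probabilistic inequality is needed, just the union bound and substitution of the bound on $R(\hM\circ\trS)$.
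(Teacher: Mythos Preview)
Your proposal is correct and follows essentially the same route as the paper: invoke Lemma~\ref{lem:pickings} to pass from $(\rho,\delta)$-robustness on $\trS$ to $(\frac{\rho}{1-\eta},\delta)$-robustness on $\D$, apply Lemma~\ref{lem:rademacher} with the inflated parameter $\rho'=\frac{\rho}{1-\eta}$ to bound $R(\hM\circ\trS)$, plug into Theorem~\ref{thm:gapwithRade}, and union-bound the failure events. Your explicit flagging of the reparametrization $\rho\mapsto\rho'$ and of the overloaded symbol $\delta$ are both apt; the paper's own proof handles these points more tersely but in the same way.
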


\begin{proof}
Since 
we can find $s\in \hSp$ s.t. $s$ is $(\rho, \delta)$-robust with respect to $\trS$, by Lemma \ref{lem:pickings}, with probability 
	$1- \frac{C}{\eps^d}
	e^{-\frac{\eta^2 p N}2}$, $s$ is $(\frac{\rho}{1-\eta}, \delta)$-robust with respect to $\D$. Therefore, $s\in \mathbf{\hat M}_{\D,\frac{\rho}{1-\eta},\delta}$. Apply Lemma \ref{lem:rademacher}, we have
		With probability at least $1-\frac{C}{\eps^d} 	e^{-\frac{\eta^2 \rho N}{3(1-\eta)}}$, 
	$R(\hM\circ \trS)\leq \frac{\rho(1+\eta)}{1-\eta}+
	\frac{1-\delta}{2\delta }+\frac{d}{ \sqrt{N}}$. Combined with Theorem \ref{thm:gapwithRade}, the proof is complete.	
\end{proof} 

In summary, Theorem \ref{thm:main} states that if we can find a solution $s$ which ``fits'' the training set, and is  very robust, then it generalizes to the test set. 

\section{Missing Proofs of Section~\ref{sec:worst_case_bound}}
\label{sec:missing_proof}
\begin{fact}[Pythagorean Theorem]\label{fct:pythagorean}
If $A$ and $B$ are matrices with the same number of rows and columns, then $AB^\top=0$ implies $||A+B||_F^2 = ||A||_F^2 + ||B||_F^2$.
\end{fact}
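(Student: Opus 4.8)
The plan is to reduce the statement to the trace identity for the Frobenius norm. Recall that for any matrix $M$ one has $\|M\|_F^2 = \mathrm{Tr}(MM^\top)$, since $\mathrm{Tr}(MM^\top) = \sum_i (MM^\top)_{ii} = \sum_{i,j} M_{ij}^2$. So I would start by applying this to $M = A+B$ and expanding the product bilinearly:
\[
\|A+B\|_F^2 = \mathrm{Tr}\big((A+B)(A+B)^\top\big) = \mathrm{Tr}(AA^\top) + \mathrm{Tr}(AB^\top) + \mathrm{Tr}(BA^\top) + \mathrm{Tr}(BB^\top).
\]
The outer two terms are exactly $\|A\|_F^2$ and $\|B\|_F^2$, so everything comes down to showing that the two cross terms vanish.

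For the cross terms: the hypothesis $AB^\top = 0$ gives $\mathrm{Tr}(AB^\top) = 0$ directly, and since $BA^\top = (AB^\top)^\top = 0$ we also get $\mathrm{Tr}(BA^\top) = 0$ (equivalently, $\mathrm{Tr}(BA^\top) = \mathrm{Tr}(AB^\top)$ by transpose-invariance of the trace). Substituting back finishes the argument.

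There is essentially no obstacle here — this is just the matrix form of the classical Pythagorean identity $\|u+v\|_2^2 = \|u\|_2^2 + \|v\|_2^2$ for orthogonal vectors. If one prefers to avoid traces entirely, an equivalent route is to work row by row: letting $a_i, b_i$ denote the $i$-th rows of $A$ and $B$, the condition $AB^\top = 0$ in particular says $\langle a_i, b_i\rangle = (AB^\top)_{ii} = 0$ for every $i$, and then $\|A+B\|_F^2 = \sum_i \|a_i + b_i\|_2^2 = \sum_i \|a_i\|_2^2 + 2\sum_i\langle a_i, b_i\rangle + \sum_i \|b_i\|_2^2 = \|A\|_F^2 + \|B\|_F^2$.
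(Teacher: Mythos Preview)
Your proof is correct; the trace identity $\|M\|_F^2 = \mathrm{Tr}(MM^\top)$ together with bilinearity and $BA^\top = (AB^\top)^\top = 0$ is exactly the right argument. The paper itself does not prove this statement --- it is recorded as a standard fact without proof --- so there is nothing to compare against.
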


\begin{proof}[Proof of Lemma~\ref{lem:rank-k-V}]
Note that $AVV^\top$ is a row projection of $A$ on the $\colsp(V)$. Then, for any conforming $Y$,
\begin{align*}
(A - AVV^\top) (AVV^\top - YV^\top)^\top&= A (I-VV^\top) V(AV-Y)^\top = A (V - VV^\top V)(AV-Y)^\top = 0. 
\end{align*}
where the last equality follows from the fact if $V$ has orthonormal columns then $VV^\top V = V$ (e.g., see Lemma~3.5 in~\citep{clarkson2009numerical}). 
Then, by the Pythagorean Theorem (Fact~\ref{fct:pythagorean}), we have
\begin{align}\label{eq:pythagorean-rel}
||A - YV^\top||_F^2 = ||A - AVV^\top||_F^2 + ||AVV^\top - YV^\top||_F^2
\end{align}
Since $V$ has orthonormal columns, for any conforming $x$, $||x^\top V^\top|| = ||x||$. Thus, for any $Z$ of rank at most $k$,
\begin{align}\label{eq:second-term}
||AVV^\top - [AV]_kV^\top||_F = ||(AV - [AV]_k)V^\top||_F &= ||AV - [AV]_k||_F \nonumber \\ 
&\leq ||AV - Z||_F = ||AVV^\top - ZV^\top||_F
\end{align}
Hence,
\begin{align*}
||A - [AV]_kV^\top||_F^2 
&= ||A - AVV^\top||_F^2 + ||AVV^\top - [AV]_k V^\top||_F^2 \;\rhd \text{By \eqref{eq:pythagorean-rel}} \\
&\leq ||A - AVV^\top||_F^2 + ||AVV^\top - ZV^\top||_F^2 \qquad\rhd \text{By \eqref{eq:second-term}}
\end{align*}
This implies that $[AV]_kV^\top$ is a best rank-$k$ approximation of $A$ in the $\colsp(V)$.
\end{proof}
\end{document}